\newcommand{\err}{\mathrm{err}}
\newcommand{\fd}{\mathfrak{d}}
\newcommand{\cD}{\mathcal D}
\newcommand{\cE}{\mathcal E}
\newcommand{\cH}{\mathcal H}
\newcommand{\cL}{\mathcal L}
\newcommand{\cR}{\mathcal R}
\newcommand{\cO}{\mathcal O}
\newcommand{\dd}{\mathrm{d}}
\newcommand{\IW}{\mathbb W}
\newcommand{\cW}{\mathcal W}
\newcommand{\vecn}{\mathfrak n}
\newcommand{\II}{\mathbb I}
\renewcommand{\IJ}{\mathbb J}
\newcommand{\1}{1\hspace{-0.098cm}\mathrm{l}}
\newcommand{\N}{{\mathbb N}}
\newcommand{\IA}{{\mathbb A}}
\newcommand{\R}{{\mathbb R}}
\newcommand{\cN}{{\mathcal N}}
\newcommand{\ID}{\mathbb D}
\theoremstyle{plain}
\newtheorem{theorem}{Theorem}[section]
\newtheorem{prop}[theorem]{Proposition}
\newtheorem{defi}[theorem]{Definition}
\theoremstyle{definition}
\newtheorem{rem}[theorem]{Remark}
\newtheorem{exa}[theorem]{Example}
\begin{document}

\title[On the existence of optimal shallow feedforward ReLU networks]%
{On the existence of optimal shallow \\ feedforward networks with ReLU activation}

\author[]
{Steffen Dereich}
\address{Steffen Dereich\\
	Institute for Mathematical Stochastics\\
	Faculty of Mathematics and Computer Science\\
	University of M\"{u}nster, Germany}
\email{steffen.dereich@uni-muenster.de}

\author[]
{Sebastian Kassing}
\address{Sebastian Kassing\\
	Faculty of Mathematics\\
	University of Bielefeld, Germany}
\email{skassing@math.uni-bielefeld.de}

\keywords{Neural networks, shallow networks, best approximation, ReLU activation, approximatively compact}
\subjclass[2020]{Primary 68T07; Secondary 68T05, 41A50}

\begin{abstract} 
	We prove existence of global minima in the loss landscape for the approximation
	of continuous target functions using shallow feedforward artificial neural networks with ReLU activation. This property is one of the fundamental artifacts separating ReLU from other commonly used activation functions.
	We propose a kind of closure of the search space so that in  the extended space minimizers exist.  In a second step, we show under mild assumptions that the newly added functions in the extension perform worse than appropriate representable ReLU networks. This then implies that the optimal response in the extended target space is indeed the response of a ReLU network.
\end{abstract}

\maketitle

\section{Introduction} 
Modern machine learning algorithms are commonly based on the optimization of artificial neural networks (ANNs) through gradient based algorithms. The overwhelming success of these methods in practical applications has encouraged many scientists to build the mathematical foundations of machine learning and, in particular, to identify universal structures in the training dynamics that might provide an explanation for the mind-blowing observations practitioners make. One key component of ANNs is the activation function. Among the various activation functions that have been proposed, the rectified linear unit (ReLU), which is defined as the maximum between zero and the input value, has emerged as the most widely used and most effective activation function. There are several reasons why ReLU has become such a popular choice, e.g., it is easy to implement, computational efficient and overcomes the vanishing gradient problem, which is a common issue with other activation functions when training ANNs. In this work, we point out and prove a more subtle feature of the ReLU function that separates ReLU from several other common activation functions and might be one of the key reasons for its popularity in practice: the existence of global minima in the optimization landscape.

A popular line of research studies the optimization procedure (also called \emph{training}) for ANNs using gradient descent (GD) type methods. Since the \emph{loss/error function} in a typical machine learning optimization task is non-linear, non-convex and even non-coercive it remains an open problem to rigorously prove (or disprove) convergence of GD even in the simplest scenario of optimizing a shallow ANN, i.e., an ANN with only one hidden layer. Existing theoretical convergence results often assume the process to stay bounded, i.e., for every realization there exists a compact set such that the process does not leave this set during training, see e.g. \cite{bolte2007lojasiewicz, Eberle2021arXiv, jentzen2021existence}
for results concerning gradient flows, 
\cite{absil2005convergence, attouch2009convergence} for results concerning deterministic gradient methods,
\cite{TADIC2015convergence,MR4056927, mertikopoulos2020sure, dereich2021convergence} for results concerning stochastic gradient methods
and
\cite{dereich2021cooling} for results concerning gradient based diffusion processes. Many results go back to classical works by \L ojasiewicz concerning gradient inequalities for analytic target functions and direct consequences for the convergence of gradient flow trajectories under the assumption of staying local \cite{lojasiewicz1963propriete,lojasiewicz1965ensembles,lojasiewicz1984trajectoires}.

In this context, it seems natural to ask for the existence of ANNs that solve the minimization task within the search space. More explicitly, if there does not exist a global minimum in the optimization landscape then every sequence that approaches the minimal loss value diverges to infinity. This might lead to slow convergence or even rule out convergence of the loss value, which is the property that practitioners are most interested in. Therefore, it seems reasonable to choose a network architecture, activation function and loss function such that there exist global optima in the optimization landscape. 

Overparametrized ANNs in the setting of empirical risk minimization (more neurons in the hidden layer than data points to fit) are often able to perfectly interpolate the finitely many data points such that there exists a network configuration achieving zero error and, thus, a global minimum in the search space. However, for general measures $\mu$ not necessarily consisting of a finite number of Dirac measures, the literature on the existence of global minima is very limited. There exist positive results for the approximation of functions in the space $L_p([0,1]^d, \|\cdot\|_p)$ with shallow feedforward ANNs using \emph{heavyside activation} \cite{kainen2003best}, the approximation of \emph{one-dimensional} Lipschitz continuous target functions with shallow feedforward ANNs using ReLU activation and the standard mean square error \cite{jentzen2021existence}, and the approximation of multi-dimensional, continuous target functions with shallow \emph{residual} ANNs using ReLU activation, see
\cite{DJK22}. Conversely, for several common (smooth) activations such as the standard logistic activation, softplus, arctan, hyperbolic tangent and softsign there, generally, do not exist minimizers in the optimization landscape for smooth target functions (or even polynomials), see \cite{petersen2021topological} and \cite{gallonjentzenlindner2022blowup}. 
This phenomenon can also be observed in empirical risk minimization for the hyperbolic tangent activation. As shown in  \cite{lim2022best},  in the underparametrized setting,
there exist input data such that for all output data from a set of positive Lebesgue measure there does not exist  minimizers in the optimization landscape. 
It remains an open problem whether for ReLU activation this phenomenon prevails.

In this article, we prove, for the first time, existence results for \emph{shallow feedforward ReLU ANNs} with multi-dimensional input space. Interestingly, minimizers exist under very mild assumptions on the optimization problem. This existence property indicates the robustness of ReLU activation and may be a reason for its success in practical applications.
For the proof we proceed as follows. First,  we  show existence of minimizers in an extended target space that comprises of the representable responses of ANNs and additional discontinuous generalized responses. Note that for many activation functions (including ReLU) the set of \emph{realization/response functions} is not closed for an appropriate metric, see also \cite{girosi1990networks}.  Second, we show that the additional discontinuous responses perform worse than representable ones under mild conditions on the optimization problem. Compared to~\cite{DJK22}, where residual networks are treated, the situation is more complex in classical feedforward ReLU network  as treated here. This is caused by the more sophisticated structure of the extended search space, see Definition~\ref{def:genres}.

We present a special case of our main result in the situation where we focus on approximation of continuous target functions with shallow ANNs using ReLU activation under $L_p$-loss.  

\begin{theorem}
	\label{thm:main_simplified0}
	Let 
	$ d_{\mathrm{in}}, d \in \N $, $p > 1$ and
	$ 
	\fd 
	=
	( d_{\mathrm{in}} + 2 )
	d
	+ 1
	$. Let 
	$ f \colon \R^{ d_{\mathrm{in}} } \to \R $ 
	and 
	$ h \colon \R^{ d_{\mathrm{in}} } \to [0,\infty) $
	be continuous functions and
	assume that 
	$
	h^{ - 1 }( (0,\infty) )
	$
	is a bounded and convex set.
	For every 
	$ \theta = ( \theta_1, \dots, \theta_{ \fd } ) \in \R^{ \fd } $
	let 
		$
	\mathrm{err} 
	\colon 
	\R^{ \fd }
	\to 
	\R
	$
	be given by 
	$
	\textstyle
	\mathrm{err}( \theta )
	=
	\int_{ 
		\R^{ 
			d_{\mathrm{in}} 
		} 
	}
	|
	f(x)
	-
	\mathfrak{N}_{ \theta }( x )
	|^p
	h(x)
	\,
	dx,
	$
	where
	\begin{equation*}
		\begin{split}
			\textstyle 
			\mathfrak{N}_{ \theta }( x ) 
			& = 
			\textstyle 
			\theta_{ \fd } 
			+
			\bigl(
			\sum_{ j = 1 }^{ 
				d
			}
			\theta_{ 
				(d_{\mathrm{in}}+1) d
				 + j 
			}
			\max\{
			\theta_{ d_{\mathrm{in}} d + j }
			+
			\sum_{ i = 1 }^{d_{\mathrm{in}}}
			\theta_{ (j-1) d_{\mathrm{in}} + i } 
			x_i
			, 0
			\}
			\bigr)
			.
		\end{split}
	\end{equation*}
	Then there exists $ \theta \in \R^{ \fd } $
	such that 
	$
	\mathrm{err}( \theta )
	=
	\inf_{ \vartheta \in \R^{ \fd } }
	\mathrm{err}( \vartheta )
	$. 
\end{theorem}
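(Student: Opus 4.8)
The plan is to follow the two-step route sketched in the introduction: first produce a minimizer in an enlarged search space that is closed under the kind of convergence that minimizing sequences can exhibit, and then argue that such a minimizer cannot be one of the genuinely new (discontinuous) elements, hence must be the response of an actual ReLU network.

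For the first step, write $O=h^{-1}((0,\infty))$, so that $\overline O$ is compact and convex and, since $f,h$ are continuous, $\err(\theta)<\infty$ for every $\theta$ and $m:=\inf_\vartheta\err(\vartheta)\in[0,\infty)$. I would fix a minimizing sequence $(\theta^{(n)})$ and first exploit the scaling invariance $(a_j,b_j,c_j)\mapsto(\lambda a_j,\lambda b_j,\lambda^{-1}c_j)$, $\lambda>0$, of each hidden neuron (here $a_j\in\R^{d_{\mathrm{in}}}$, $b_j\in\R$ are the inner weight/bias and $c_j\in\R$ the outer weight encoded in $\theta$) to normalize $\|(a_j^{(n)},b_j^{(n)})\|\in\{0,1\}$, discard dead neurons, and pass to a subsequence so that $(a_j^{(n)},b_j^{(n)})$ converges for every surviving $j$. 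The outer weights $c_j^{(n)}$ and the output bias $\theta^{(n)}_\fd$ may still be unbounded, and the heart of this step is to organize the neurons into groups: those with bounded outer weight converge to an honest ReLU neuron, whereas a group whose outer weights blow up must, in order to keep $\err$ finite along the sequence, consist of neurons with a common limiting hyperplane and cancelling leading terms, its joint contribution then converging on $O$ to a ``clipped affine'' term $\1_{\{b+\langle a,\cdot\rangle>0\}}(\mu+\langle\nu,\cdot\rangle)$; neurons whose kink escapes $\overline O$ or whose inner weights collapse leave an affine term, and the bias contributes a constant. This yields a subsequence along which $\mathfrak{N}_{\theta^{(n)}}\to g$ pointwise a.e.\ on $O$ for some $g$ in the extended search space of Definition~\ref{def:genres}, and Fatou's lemma gives $\int_O|f-g|^p h\,dx\le\liminf_n\err(\theta^{(n)})=m$.

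For the second step I would show that if such a $g$ is discontinuous across some hyperplane $H$ meeting $O$, then it can be strictly improved within the same architecture. Since $O$ is open and convex, $H$ splits it into exactly two nonempty convex pieces across which $g$ jumps by a (possibly non-constant) affine function; on a thin slab $\{0<b+\langle a,x\rangle<\eps\}\cap O$ one replaces this jump by a steep but continuous transition agreeing with $g$ off the slab, obtaining a function $g_\eps$ that is again in the extended space and is now continuous. Because $f$ is continuous it is nearly constant across the slab, so along the transition the integrand $|f-g_\eps|^p h$ is, up to $o(1)$, the $p$-th power of an affine function of the transition parameter; since $t\mapsto|t|^p$ is \emph{strictly} convex for $p>1$, its average over the slab falls strictly below the two endpoint values realized by the jump, whence $\int_O|f-g_\eps|^p h\,dx<\int_O|f-g|^p h\,dx$ for $\eps$ small. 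As $g_\eps$ is representable and $\int_O|f-g|^ph\,dx\le m=\inf$ over representable networks, this is a contradiction, so $g$ must be continuous; and a continuous element of the extended space is the response of a ReLU network with at most $d$ hidden neurons (affine pieces and non-jumping clipped-affine terms each collapsing to a single ReLU neuron after adjusting the output bias). Hence $g=\mathfrak{N}_\theta$ for some $\theta\in\R^\fd$ with $\err(\theta)=\int_O|f-g|^ph\,dx\le m$, so that $\err(\theta)=m$.

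The routine part is the first step; the real difficulty, and what forces the elaborate Definition~\ref{def:genres}, is the second: identifying exactly which discontinuous limits occur, how many hidden neurons each ``jump'' term costs, and carrying out the continuous replacement strictly within the budget of $d$ neurons. This is more delicate than the residual-network case of \cite{DJK22}, since here a single jump term can absorb two hidden neurons and can jump by a non-constant affine function along the hyperplane. The convexity hypothesis on $h^{-1}((0,\infty))$ is used precisely to ensure that each relevant hyperplane cuts $\supp h$ into only two connected pieces, keeping the local improvement clean, while continuity of $h$ and boundedness of $O$ enter through the slab-volume estimates and the lower semicontinuity used in the first step.
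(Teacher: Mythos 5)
Your overall architecture is the same as the paper's (pass to a closure of the search space, extract a minimizer by compactness and Fatou, then rule out the discontinuous elements using strict convexity of $t\mapsto|t|^p$), and your step 1 is essentially the paper's Proposition on generalized responses. The gap is in step 2, and it sits exactly at the point you yourself flag as ``the real difficulty''. The local surgery you describe --- a steep continuous transition supported on a thin slab that \emph{agrees with $g$ off the slab} --- does not exist in general. If $g$ jumps across $H=\{\vecn\cdot x=o\}$ by an affine function $\delta\cdot x+\mathfrak b$ that is non-constant on $H$ (i.e.\ $\delta$ is not a multiple of $\vecn$), then any continuous piecewise-affine interpolant whose kinks lie only along the two parallel slab boundaries must satisfy: the gradient jump across each kink is a multiple of $\vecn$, hence the total gradient change across the slab is a multiple of $\vecn$ and can never equal $\delta$. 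So a two-neuron replacement necessarily differs from $g$ outside the slab by a nontrivial linear term. In case (a) of Definition~\ref{def:genres} (a free affine budget $m_0=1$) this discrepancy can be absorbed, but in the critical case (b) --- $m_0=0$, non-constant affine part, and only the linear dependence of the normals of the multiplicity-two terms keeping the response inside $\mathfrak R_d$ --- it cannot, and the modified function is no longer representable within $d$ neurons. The paper's resolution, which is the one idea missing from your sketch, is to perturb \emph{all} multiplicity-two summands simultaneously, tilting the $j$th one by $\kappa\alpha_j(\vecn_j\cdot x-o_j)$ where $(\alpha_j)$ are the coefficients of the dependence relation $\sum_j\alpha_j\vecn_j=0$: the individual linear discrepancies then cancel globally, and the resulting $\cR^{\kappa}$ is an honest response of at most $d$ neurons agreeing with $\cR$ outside a union of $O(\kappa^{-1})$-thin slabs.

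A second, smaller inaccuracy: the claim that the slab average of $|f-g_\eps|^ph$ ``falls strictly below the two endpoint values realized by the jump'' is not what convexity gives, and the first-order error change produced by a single one-sided transition is of the form $(q^++q^-)\overline L-(q^-L^++q^+L^-)$ with mismatched weights $q^\pm$ (the transition region straddles $H$ asymmetrically), which need not be negative. The paper fixes this by also constructing the oppositely tilted $\cR^{-\kappa}$ and showing the \emph{sum} of the two first-order changes equals $\sum_j\int_{H_j}hq_j\,(2\overline L_j-(L_j^++L_j^-))$, which is strictly negative by strict convexity; hence at least one of $\cR^{\pm\kappa}$ improves on $\cR$. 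Neither of these repairs is cosmetic --- without the simultaneous, dependence-weighted perturbation and the two-sided averaging, the elimination of case (b) (and hence the theorem for $d_{\mathrm{in}}\ge 2$) does not go through.
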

Theorem~\ref{thm:main_simplified0} is a special case of the more general Theorem~\ref{thm:main2}, which treats a broader class of loss functions and measures. 

Let us introduce the central objects and notations of this article. In the following, we represent ANNs in a more structurized way. We consider networks with $d_\mathrm{in}$-dimensional input space and one hidden layer consisting of $d$ neurons that apply ReLU activation, i.e. $(x)^+=\max(x,0)$. We describe the weights of the ANN by a matrix   $W^1=(w_{j,i}^1)_{j=1,\dots,d,i=1,\dots,d_\mathrm{in}}$ and a row vector $W^2=(w_{1}^2,\dots, w_d^2)$, and the biases by a column vector $b^1=(b^1_i)_{i=1,\dots,d}$ and a scalar $b^2$.
Moreover, for $j=1, \dots,d$, we write $w_j^1=(w_{j,1}^1, \dots, w_{j,d_{\mathrm{in}}}^1)^\dagger$, where $a^\dagger$ denotes the transpose of a vector or a matrix $a$. We let
$$
\IW=(W^1,b^1, W^2, b^2) \in \R^{d\times d_{\mathrm{in}}}\times \R^{d}\times \R^{1\times d}\times \R 
=:\cW_d
$$
and call $\IW$ a \emph{network configuration} and $\cW_d$ the \emph{parametrization class}.
We often refer to a configuration of a neural network as the (\emph{neural}) \emph{network}~$\IW$. A configuration $\IW \in \cW_d$ describes a function $\mathfrak N^\IW: \R^{d_{\mathrm{in}}} \to \R$ via
\begin{align} \label{eq:response}
	\mathfrak N^\IW(x)= \sum_{j=1}^{d} w_{j}^2 \bigl( w_{j}^1\cdot x+b_j^1\bigr)^++b^2,
\end{align}
where $\cdot$ denotes the scalar product on $\R^{d_{\mathrm{in}}}$.
We call $\mathfrak N^\IW$ \emph{realization function} or \emph{response} of the network~$\IW$. We allow as parameter $d$ all values from $\N_0:=\{0,1,\dots\}$, where a response of a network with zero neurons is a constant function (by definition).
For an introduction into general neural networks with possibly multiple hidden layers see e.g.~\cite{petersen2021topological}.

Note that, in general, the response of a network is a continuous, piecewise affine function from  $\R^{d_\mathrm{in}}$ to $\R$. We conceive $\IW \mapsto \mathfrak N^{\IW}$ as a parametrization of a class of potential response functions  $\{\mathfrak N^{\IW}: \IW \in \cW_d\}$ in a minimization problem.  
More explicitly, let $\mu$ be a finite measure on
the Borel sets of $\R^{d_{\mathrm{in}}}$, let $\ID=\mathrm{supp}(\mu)$  and $\mathcal L:\ID\times \R \to \R_+$ be a product-measurable function, the \emph{loss function}. 
We aim to minimize the error
	$$
	\mathrm{err}^{\cL}(\IW)=\int_\ID \mathcal L(x,\mathfrak{N}^\IW(x))\,\dd \mu(x)
	$$
over all $\IW \in\cW_d$ for a given $d\in\N_0$
	and let
	\begin{align}\label{eq_min}
	\mathrm{err}^{\cL}_{d}=\inf_{\IW \in \cW_d}\mathrm{err}^{\cL}(\IW)
	\end{align}
be the minimal error for the optimization task when using a neural network with $d$ neurons in the hidden layer.

The aim of this work is to give sufficient conditions on the loss function $\cL$ and the measure $\mu$ that guarantee existence of a network $\IW \in \cW_d$ with $\err^{\cL}(\IW)=\err_d^\cL$. We stress that if there does not exist a neural network $\IW \in \cW_d$ satisfying $\err^\cL(\IW) = \err_d^\cL$ then every sequence $(\IW_n)_{n \in \N}\subset \cW_d$ of networks satisfying $\lim_{n \to \infty} \err^\cL(\IW_n) = \err_d^\cL$ diverges to infinity. Our framework includes the approximation of a continuous function using the $L_p$-loss, see Theorem~\ref{thm:main_simplified0}.

For a general introduction into best approximators in normed spaces we refer the reader to~\cite{singerbest}. A good literature review regarding the loss landscape in neural network training can be found in \cite{E2020towards}. For statements about the existence of non-optimal local minima in the training of (shallow) networks we refer the reader to \cite{swirszcz2016local, safran2018spurious, Venturi2019Spurious} and \cite{christof2022omnipresence}.

We state the main result of this article.
	
	\begin{theorem} \label{thm:main2}
		Suppose that $\ID=\mathrm{supp}(\mu)$ is compact and that $\mu$ has a continuous Lebesgue density $h:\R^{d_\mathrm{in}}\to\R_+$. Assume that for every hyperplane $H$ that intersects the interior of the convex hull of $\ID$, there exists an $x\in H$ with $h(x)>0$. Moreover, assume that the loss function $\cL: \ID \times \R \to \R_{+}$ satisfies the following assumptions:
		\begin{enumerate}
			\item[(i)]
			(Continuity in the first argument)  For every $y\in \R$, $\ID\ni x\mapsto \cL(x,y)$ is continuous. 
			\item[(ii)] (Strict convexity in the second argument) For all $x \in \ID$, $y \mapsto \cL(x,y)$ is strictly convex and attains its minimum.  
		\end{enumerate}
		Then, for every $d\in\N_0$, there exists an optimal  network $\IW\in\cW_d$ with $\mathrm{err}^{\cL}(\IW)=\mathrm{err}^{\cL}_d$.
	\end{theorem}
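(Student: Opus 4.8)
The plan is to prove Theorem~\ref{thm:main2} via a compactification argument. A minimizing sequence $(\IW_n)_{n\in\N}$ of network configurations need not stay bounded, so the first step is to identify a suitable extended search space into which such sequences subconverge. Concretely, one writes each response $\mathfrak N^{\IW}$ as a sum $b^2+\sum_{j=1}^d w_j^2(w_j^1\cdot x+b_j^1)^+$ of $d$ ``ridge'' terms plus a constant, and one tracks each ridge term separately. For a single neuron with weight $(w_j^1,b_j^1)$ and outer weight $w_j^2$, the relevant information is the affine hyperplane $\{w_j^1\cdot x+b_j^1=0\}$ (equivalently the normalized pair $(w_j^1/|w_j^1|,b_j^1/|w_j^1|)$ together with an orientation) and the ``slope'' $w_j^2|w_j^1|$. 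As $n\to\infty$ these data can degenerate in several ways: the slope may blow up, the hyperplane may escape to infinity (so the kink leaves $\ID$ and the term becomes affine on $\ID$), or the hyperplane may survive but the slope blow up while the product converges — producing, in the limit, a ``generalized response'' that is piecewise affine but may be discontinuous across a limiting hyperplane (this is exactly the object in Definition~\ref{def:genres}, which I am told to assume is already in place). So the first main step is: define the extended target space $\widehat{\cW}_d$ of generalized responses, equip it with a topology (e.g. $L_p(\mu)$-convergence, or pointwise $\mu$-a.e. convergence), and show it is \emph{compact} in the sense that every minimizing sequence has a subsequence converging to some generalized response $g$, with $\mathrm{err}^\cL$ lower semicontinuous along it — so $\mathrm{err}^\cL(g)=\mathrm{err}^\cL_d$.

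**Why the limit attains the infimum.**

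For the lower semicontinuity I would use assumption (i) (continuity of $\cL(\cdot,y)$) together with (ii) (convexity in $y$): along a subsequence where $\mathfrak N^{\IW_n}\to g$ pointwise $\mu$-a.e.\ (which one can arrange because the set of possible ``limit shapes'' is finite-dimensional up to the finitely many hyperplane/slope data), Fatou's lemma gives $\int_\ID \cL(x,g(x))\,d\mu(x)\le \liminf_n \mathrm{err}^\cL(\IW_n)=\mathrm{err}^\cL_d$; and since $g$ lies in the extended space one also has $\mathrm{err}^\cL(g)\ge\mathrm{err}^\cL_d$ by definition of the infimum over a larger class — wait, that inequality goes the wrong way, so more carefully: the infimum of $\mathrm{err}^\cL$ over the extended space is $\le\mathrm{err}^\cL_d$ and the minimizing sequence shows it is also $\ge$, hence equals $\mathrm{err}^\cL_d$, and $g$ realizes it. The boundedness needed to control the tails (so that Fatou or dominated convergence applies, and so that configurations with slopes escaping to $\pm\infty$ are excluded as candidate limits) comes from the hypothesis that $\mu$ has a bounded support $\ID$ and a continuous density: a neuron whose slope $|w_j^2||w_j^1|$ blows up while its kink stays inside $\ID$ forces $\mathrm{err}^\cL\to\infty$ unless it is cancelled, and the cancellation patterns are exactly what the generalized responses encode.

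**Removing the generalized responses — the crux.**

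The heart of the argument, and the step I expect to be the main obstacle, is the second stage: showing the optimal generalized response $g$ is in fact representable by a genuine ReLU network $\IW\in\cW_d$, i.e.\ that the ``new'' discontinuous elements are never strictly better. Suppose $g$ has a genuine jump across some hyperplane $H$. By hypothesis $H$ meets the interior of the convex hull of $\ID$ (otherwise $g$ is affine on $\ID$ and contributes nothing that a true neuron cannot), and there is a point $x_0\in H$ with $h(x_0)>0$, hence a whole neighborhood on which $\mu$ has positive mass on both sides of $H$. I would then perform a local perturbation: replace the discontinuous ``step'' near $H$ by a steep-but-finite ReLU ridge with kink on $H$, i.e.\ un-degenerate one of the neurons, keeping the other $d-1$ neurons fixed; as the steepness parameter $t\to\infty$ this approximant $g_t$ converges to $g$ pointwise off $H$, so $\mathrm{err}^\cL(g_t)\to\mathrm{err}^\cL(g)$, and one shows that for large $t$ one can actually do strictly better — exploiting strict convexity of $\cL(x,\cdot)$ in the second argument. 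The point is that a genuine jump of height $\delta\ne0$ is, on a neighborhood of $H$ of positive $\mu$-measure, pointwise the limit of affine interpolants, and by strict convexity the integral $\int \cL(x,\cdot)\,d\mu$ is \emph{strictly} decreased by smoothing the jump at an intermediate value rather than committing to one side — unless the jump height is already zero. This ``a jump can always be profitably smoothed'' lemma, made quantitative and combined over all the degenerate neurons of $g$ (there are at most $d$ of them, handled one at a time or simultaneously via the combinatorial structure of Definition~\ref{def:genres}), yields that any minimizing $g$ with a nontrivial jump is not optimal — contradiction. Hence the optimal $g$ has no jumps, is therefore a continuous piecewise-affine function of the allowed form, and is the response of some $\IW\in\cW_d$. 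The delicate bookkeeping is that degenerating neurons can interact (two kinks converging to the same hyperplane, or a kink-plus-affine-escape combination), so one must set up the extended space carefully enough that ``$g$ representable'' is equivalent to ``all jump heights vanish'', and that the smoothing perturbation of one neuron does not inadvertently spoil another — this is precisely where the feedforward case is more involved than the residual case of \cite{DJK22}, as the introduction warns.
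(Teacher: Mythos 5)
Your overall architecture matches the paper's: compactify the search space into the class of generalized responses, extract a minimizer there by subsequence/Fatou arguments, and then show that a discontinuous minimizer can be strictly improved by a smoothing perturbation exploiting strict convexity of $\cL(x,\cdot)$. The first stage is essentially Proposition~\ref{prop:genres} and your sketch of it is fine. The gap is in the crux step, and it is twofold. First, you cannot ``un-degenerate one neuron, keeping the other $d-1$ fixed.'' A finite discontinuous jump $\1_{A_j}(x)(\delta_j\cdot x+\mathfrak b_j)$ with $\mathfrak b_j+\delta_j\cdot x\neq0$ on $\partial A_j$ arises only as a limit of \emph{two} nearly cancelling steep ReLU ridges (this is why such a term has multiplicity two in Definition~\ref{def:genres}); replacing it by a steep-but-finite pair $(\dots+\kappa\alpha_j(\vecn_j\cdot x-o_j))^+-(\dots-\kappa\alpha_j(\vecn_j\cdot x-o_j))^+$ is dimension-neutral but differs from the original term by the globally unbounded linear function $\kappa\alpha_j(\vecn_j\cdot x-o_j)$ away from the transition strip. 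That linear excess must be cancelled, and the only mechanisms available are absorbing it into the affine part (cases (a)/(c) of Definition~\ref{def:genres}) or cancelling it against the analogous excesses of the \emph{other} discontinuous terms via the linear dependence $\sum_j\alpha_j\vecn_j=0$ of case (b). So the smoothing must be performed simultaneously on all multiplicity-two terms with coefficients dictated by that dependence relation; doing it one hyperplane at a time, as you propose, changes the response on a set of full measure and does not produce a competitor of the same dimension.

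Second, your claim that strict convexity makes a single smoothing \emph{strictly} better is not correct as stated. On the transition strip the smoothed response has average loss $\overline L_j(x')$ (the average of $\cL(x',\cdot)$ over the segment between the two one-sided values), while the discontinuous response contributes the \emph{weighted} average $q_j^+L_j^++q_j^-L_j^-$ with weights given by the lengths of the strip on either side of $H_j$; convexity gives $2\overline L_j\le L_j^++L_j^-$, which does not dominate an asymmetric weighted average. The paper resolves this by introducing \emph{both} perturbations $\cR^{\kappa}$ and $\cR^{-\kappa}$, showing that the sum of the two first-order error changes equals $\sum_j\int_{H_j}h\,q_j\,(2\overline L_j-(L_j^++L_j^-))$, which is strictly negative, so at least one of the two is a strict improvement. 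Finally, your sketch omits the induction on $d$ and the dimension bookkeeping (a continuous limit of multiplicity-two terms drops to dimension $\le d-1$, which is why the dichotomy between ``no better than $\overline{\mathrm{err}}_{d-1}^\cL$'' and ``strictly worse than $\overline{\mathrm{err}}_d^\cL$'' is needed to close the argument); this is an omission rather than an error, but without it the conclusion that the optimal generalized response is representable does not follow.
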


Theorem~\ref{thm:main2} is an immediate consequence of Proposition~\ref{prop:minimal} below. We stress that the statement of Proposition~\ref{prop:minimal} is stronger in the sense that it even shows that in many situations the newly added functions to the extended target space perform strictly worse than the representable responses.

\begin{exa}[Regression problem]
	Let $\mu$ be as in Theorem~\ref{thm:main2} and  suppose that $f:\R^{d_\mathrm{in}}\to \R$ is a continuous and $L:\R\to\R_+$ a strictly convex function that attains its minimum. Then $\cL:\R^{d_\mathrm{in}}\times \R\to \R_+$ given by
	$$
	\cL(x,y)=L(y-f(x))
	$$
	satisfies the assumptions of the latter theorem and, thus, the infimum 
		$$
	\inf _{\IW\in\cW_d} \int L(\mathfrak N^\IW(x)-f(x)) \,\dd \mu(x)
	$$
	is attained for a network  $\IW\in\cW_d$.
\end{exa}

\section{Generalized response of neural networks}
We will work with more intuitive  geometric descriptions of realization functions of networks $\IW\in \cW_d$ as introduced in \cite{DJK22}.
We call a network $\IW\in\cW_d$ \emph{non-degenerate} iff for all $j=1,\dots,d$ we have $w_j^1\not=0$. For a non-degenerate network $\IW$, we say that the neuron $j\in\{1,\dots, d\}$  has
\begin{itemize} 
	\item  \emph{normal}  ${\displaystyle \vecn_j=\frac {1  }{|w_j^1|} w_j^1\in \cO:=\{x\in\R^{d_\mathrm{in}}: |x|=1\}}$,
	\item \emph{offset} $o_j=-\frac {1}{|w_j^1|} b_j^1\in\R$ and
	\item \emph{kink} $\Delta_j=|w_j^1| w_j^2\in\R$. 
\end{itemize}
Moreover, we call $\mathfrak b=b^2$ the \emph{bias} of $\IW$. We call $(\vecn, o, \Delta,  \mathfrak b)$ with $\vecn = (\vecn_1, \dots, \vecn_{d})\in \cO^d$, $o=(o_1, \dots, o_d)\in \R^d$, $\Delta=(\Delta_1, \dots, \Delta_d)\in\R^d$ and $\mathfrak b \in \R$ the \emph{effective tuple of $\IW$} and write $\cE_{d}$ for the set of all effective tuples using $d$ ReLU neurons.

First we note that the response of a non-degenerate network $\IW$ can be represented in terms of its effective tuple: one has, for $x\in\R^{d_\mathrm{in}}$,
\begin{align*}
	\mathfrak N^\IW(x)&=\mathfrak b+\sum_{j=1}^{d} w^2_j(w^1_j \cdot x+b_j^1)^+=\mathfrak b+\sum_{j=1}^{d}\Delta_j \Bigl(\frac 1{|w_j^1|} w^1_j \cdot x+\frac 1{|w_j^1|} b_j^1\Bigr)^+\\
	&=\mathfrak b+\sum_{j=1}^{d} \Delta_j \bigl( \vecn_j \cdot x- o_j\bigr)^+.
\end{align*} 
With slight misuse of notation we also write
$$
\mathfrak{N}^{\vecn, o, \Delta,\mathfrak b}:\R^{d_{\mathrm{in}}}\to \R,\, x\mapsto \mathfrak b+\sum_{j=1}^{d} \Delta_j \bigl( \vecn_j \cdot x- o_j\bigr)^+
$$
and $\err^{\cL}(\vecn,o, \Delta, \mathfrak b)= \int \cL(x, \mathfrak N^{\vecn, o, \Delta, \mathfrak b}(x)) \, \dd \mu(x).$ 
Although the tuple $(\vecn,o,\Delta,\mathfrak b)$ does not  uniquely describe a neural network, it describes a response function uniquely and thus we will speak of the neural network with effective tuple  $(\vecn,o,\Delta,\mathfrak b)$.

We stress that the response of a degenerate network $\IW$ can also be described as response associated to an effective tuple. Indeed, for every $j\in \{1, \dots, d\}$ with $w_j^1=0$  the respective neuron has a constant contribution  $w_j^2(b_j^1)^+$. Now, one can choose an arbitrary normal $\vecn_j$ and offset $o_j$, set the kink equal to zero ($\Delta_j=0$) and add the constant $w_j^2(b_j^1)^+$ to the bias $\mathfrak b$. Repeating this procedure for every such neuron we get an effective tuple $(\vecn, o, \Delta, \mathfrak b) \in \cE_d$ that satisfies $\mathfrak N^{\vecn, o, \Delta, \mathfrak b} = \mathfrak N^{\IW}$. Conversely, for every effective tuple $(\vecn, o, \Delta,\mathfrak b) \in \cE_d$,
$\mathfrak N^{\vecn,o,\Delta,\mathfrak b}$ is the response of an appropriate network $\IW \in \cW_d$. In fact one can choose $b^2=\mathfrak b$ and, for $j=1,\dots, d$,
 $w_j^1=\vecn_j$, $b_j^1=-o_j$ and $w_j^2=\Delta_j$ such that for all $x \in \R^{d_{\mathrm{in}}}$
$$
w_j^2(w_j^1 \cdot x + b_j^1)^+ = \Delta_j (\vecn_j \cdot x-o_j)^+.
$$
This entails that
$$
\mathrm{err}^{\cL}_d = \inf_{(\vecn,o,\Delta,\mathfrak b)\in\cE_d}\int \cL(x,\mathfrak N^{\vecn,o,\Delta,\mathfrak b}(x))\, \dd\mu(x)
$$ 
and the infimum is attained iff there is a network $\IW\in\cW_d$ for which the infimum in~(\ref{eq_min}) is attained.
For an effective tuple $(\vecn, o, \Delta, \mathfrak b) \in \cE_d$,  we say that the $j$th ReLU neuron has the \emph{breakline}
$$
H_j=\bigl\{x\in\R^{d_{\mathrm{in}}}:\vecn_j \cdot x =  o_j\bigr\}
$$
and we call 
$$
A_j=\{x \in \R^{d_{\mathrm{in}}}: \vecn_j \cdot x>o_j\}
$$
the \emph{domain of activity} of the $j$th ReLU neuron.
By construction, we have 
$$
\mathfrak{N}^{\vecn,o,\Delta,\mathfrak b}(x)= \mathfrak b+\sum_{j=1}^{d} \1_{A_j}(x)  \bigl(\Delta_j( \vecn_j \cdot x- o_j)\bigr).
$$
Outside the breaklines, the function $\mathfrak{N}^{\vecn,o,\Delta,\mathfrak b}$ is differentiable with
$$
D\mathfrak{N}^{\vecn,o,\Delta,\mathfrak a}(x)= \sum_{j=1}^{d} \1_{A_j}(x)   \Delta_j  \vecn_j.
$$
Note that for each summand $j=1, \dots, d$ along the breakline the difference of the differential on $A_j$ and $\overline{A_j}^c$ equals $\Delta_j \vecn_j$ (which is also true for the response function $\mathfrak N^\IW$ provided that it is differentiable in the reference points and there does not exist a second neuron having the same breakline $H_j$).

Next, we introduce the class of generalized network responses. This class has the advantage that under quite mild assumptions minimizers can be found by applying compactness arguments.

\begin{defi} \label{def:genres} We call a function $\cR:\R^{d_{\mathrm{in}}} \to \R$  \emph{generalized response} if it admits the following representation: there are
$K\in\N_0$,  a tuple of open half-spaces $\mathbf A=(A_1,\dots,A_K)$ of $\R^{d_{\mathrm{in}}}$ with pairwise distinct boundaries $\partial A_1,\dots,\partial A_K$, $\mathbf m =(m_0,\dots,m_K)\in\{0,1\}\times \{1,2\}^K$, an affine mapping $\mathfrak a:\R^{d_\mathrm{in}}\to\R$, vectors $\delta_1, \dots, \delta_K \in \R^{d_{\mathrm{in}}}$ and reals $\mathfrak b_1, \dots, \mathfrak b_K \in \R$ such that  for all $x \in \R^{d_{\mathrm{in}}}$
	\begin{align}\label{eq84529}
		\cR(x)= \mathfrak a(x)+ \sum_{k=1}^K \1_{A_k}(x) \bigl(\delta_{k}\cdot x+\mathfrak b_{k}\bigr),
	\end{align}
   for all $k=1,\dots,K$ with $m_k=1$,
	$$
	\partial A_k \subset \{x \in \R^{d_{\mathrm{in}}}:\delta_k\cdot x+\mathfrak b_k=0\}
	$$
	and (at least) one of the following properties is true
	\begin{enumerate} \item[(a)] $m_0=1$, 
	\item[(b)] $(\mathfrak n_k: k\in\{1,\dots,K\}\text{ with }m_k=2)$ is linearly dependent or
		\item[(c)] $\mathfrak a$ is a constant function.
\end{enumerate}
The minimal number $m_0+\ldots+m_K$ that can be achieved is called \emph{dimension} of the generalized response.
\end{defi}

We call $A_1,\dots,A_K$ \emph{active half-spaces} of the response, $m_1,\dots,m_K$ the \emph{multiplicities} of the half-spaces $A_1,\dots,A_K$ or  summands.
For $d \in \N_0$, we denote by $\mathfrak R_d$ the space of all generalized responses of dimension at most $d$.
Moreover, we call a response $\cR\in \mathfrak R_d$ \emph{strict at dimension $d$} if it is of dimension $d-1$ or lower or if it is discontinuous. Denote by  $\mathfrak R_d^\mathrm{strict}$ the set of strict responses at dimension $d$.
 Moreover, we call a response \emph{representable} if it is continuous or, equivalently, if the multiplicities $m_1,\dots,m_K$ can be chosen to be one.

\begin{rem}
Strictly speaking, a representable response of dimension $d$ is not necessarily the response of a network with $d$ neurons in the hidden layer since in the case with $m_0=1$ we might need two ReLU neurons (instead of one) to generate the linear component of $\mathfrak a$. However, for every compact set $K\subset\R^{d_\mathrm{in}}$ and every representable response $\cR\in\mathfrak R_d$ we can find a network with $d$ neurons in the hidden layer whose response agrees on $K$ with $\cR$. Consequently, for \emph{compactly} supported measures $\mu$, the subset of representable generalized responses can all be realized on the relevant domain by appropriate shallow networks.
\end{rem}

When analyzing the minimization problem over the class of generalized responses, we can impose weaker assumptions than in the main theorem (Theorem~\ref{thm:main2}). We will use the following concepts.

\begin{defi}
	\begin{enumerate} \item[(i)] An element $x$ of a hyperplane $H\subset \R^{d_\mathrm{in}}$ is called \emph{$H$-regular} if $x\in \mathrm{supp}\, \mu|_A$ and $x\in \mathrm{supp}\, \mu|_{\overline A^c}$, where $A$ is an open half-space with $\partial A=H$.
		\item[(ii)] A measure $\mu$ on $\R^{d_\mathrm{in}}$ is called \emph{nice} if 
		all hyperplanes have $\mu$-measure zero and if for every open half-space $A$ with $\mu(A),\mu(\overline A^c)>0$ the set of $\partial A$-regular points cannot be covered by finitely many hyperplanes different from $\partial A$.
	\end{enumerate} 
\end{defi}

\begin{prop} \label{prop:genres} 
	Assume that $\mu$ is a  nice measure on~$\R^{d_\mathrm{in}}$  
and that the loss function $\cL: \ID \times \R \to \R_{+}$ is measurable and satisfies the following assumptions:
		\begin{enumerate}
			\item [(i)] (Lower-semincontinuity in the second argument) For all $x \in \ID$ and $y \in \R$, we have
			$$
			\liminf\limits_{y' \to y} \cL(x,y') \ge \cL(x,y).
			$$ 
			\item [(ii)] (Unbounded in the second argument) For all $x \in \ID$, we have
			$$
			\lim\limits_{|y| \to \infty} \cL(x,y) = \infty.
			$$
		\end{enumerate}
		Let $d\in\N_0$ with  $\mathrm{err}_d^{\cL}<\infty$. Then there exists a $\cR\in\mathfrak R_d$ with
		$$
		\int \cL(x, \cR(x))\,\dd\mu(x) =\overline{\mathrm{err}}^\cL_d:=  \inf_{\tilde \cR\in\mathfrak R_d} \int \cL(x, \tilde\cR(x))\,\dd\mu(x).
		$$
		Furthermore, for $d \ge 1$ the infimum
		$$
		\inf_{\tilde \cR \in \mathfrak R_d^{\mathrm{strict}}}  \int \cL(x, \tilde\cR(x))\,\dd\mu(x)
		$$
		 is attained on $ \mathfrak R_d^{\mathrm{strict}}$.
\end{prop}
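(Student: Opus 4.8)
The plan is a direct compactness argument on a minimizing sequence in $\mathfrak R_d$, combined with lower semicontinuity of $\cL$ in the second argument and Fatou's lemma. If $\mu=0$ every $\cR$ is optimal, so assume $\mu\neq 0$; since the response of every $\IW\in\cW_d$ is a representable generalized response of dimension $\le d$, it lies in $\mathfrak R_d$, so the hypothesis $\mathrm{err}^{\cL}_d<\infty$ gives $\overline{\mathrm{err}}^{\cL}_d\le\mathrm{err}^{\cL}_d<\infty$. Choose $(\cR_n)_{n\in\N}\subset\mathfrak R_d$ with $\int\cL(x,\cR_n(x))\,\dd\mu(x)\to\overline{\mathrm{err}}^{\cL}_d$ and fix for each $n$ a \emph{reduced} representation as in Definition~\ref{def:genres}, so that $m_{0,n}+\sum_k m_{k,n}$ equals the dimension of $\cR_n$. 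The number $K_n$ of active half-spaces is at most $d$ and the multiplicities lie in a finite set, so after a subsequence $K_n=K$, $m_{0,n}=m_0$, $m_{k,n}=m_k$ for all $n$. Writing $A_{k,n}=\{x:\vecn_{k,n}\cdot x>o_{k,n}\}$, $\mathfrak a_n(x)=p_n\cdot x+q_n$ and keeping the data $\delta_{k,n},\mathfrak b_{k,n}$, pass to a further subsequence so that $\vecn_{k,n}\to\vecn_k\in\cO$ for each $k$, $p_n/|p_n|$ converges when $p_n\neq 0$, and each of the finitely many remaining real parameters either converges in $\R$ or tends to $\pm\infty$.

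The core step is to extract a final subsequence with $\cR_n\to\cR$ $\mu$-a.e.\ for some $\cR\in\mathfrak R_d$; then assumption (i) gives $\cL(x,\cR(x))\le\liminf_n\cL(x,\cR_n(x))$ $\mu$-a.e., Fatou yields $\int\cL(x,\cR(x))\,\dd\mu(x)\le\liminf_n\int\cL(x,\cR_n(x))\,\dd\mu(x)=\overline{\mathrm{err}}^{\cL}_d$, and equality follows from $\cR\in\mathfrak R_d$, proving the first assertion. To construct $\cR$, group the active half-spaces by limiting breakline ($k,l$ in the same group iff $\vecn_k=\vecn_l$ and $o_{k,n},o_{l,n}$ have the same finite limit $o$), with limiting half-space $A=\{\vecn_k\cdot x>o\}$. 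If $\mu(A)=0$ (in particular if the offset tends to $+\infty$) the group's summands are $\mu$-a.e.\ zero and are discarded. If $\mu(\overline A^c)=0$ (in particular if the offset tends to $-\infty$) the group's contribution is $\mu$-a.e.\ affine and is absorbed into $\mathfrak a$. If $\mu(A)>0$ and $\mu(\overline A^c)>0$: the set on which two half-spaces of the group disagree shrinks to the $\mu$-null hyperplane $\partial A$, so for $\mu$-a.e.\ $x$ the summands are eventually all active or all inactive and their sum equals $\1_{A}(x)\bigl((\sum_k\delta_{k,n})\cdot x+\sum_k\mathfrak b_{k,n}\bigr)$; once the summed coefficients are shown to converge (next paragraph), this tends $\mu$-a.e.\ to $\1_A(x)(\delta\cdot x+\mathfrak b)$, representable if $\delta\cdot x+\mathfrak b$ vanishes on $\partial A$ and a genuine jump otherwise. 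Likewise $p_n,q_n$ must stay bounded, for otherwise $\mathfrak a_n\to\pm\infty$ $\mu$-a.e.\ off a slab shrinking to a hyperplane — hence, since hyperplanes are $\mu$-null, on a set of full $\mu$-mass — contradicting the bound via assumption (ii); so $\mathfrak a_n\to\mathfrak a$. After further merging summands with the same or complementary limiting breakline, collecting the surviving summands and the absorbed affine pieces with $\mathfrak a$, one obtains a representation of $\cR$ with pairwise distinct boundaries; each of these operations does not increase $m_0+\sum_k m_k$, so $\dim\cR\le d$, and a short case check (using in particular that a singleton empty family of $m=2$-normals is handled by (c)) shows one of (a),(b),(c) of Definition~\ref{def:genres} holds, i.e.\ $\cR\in\mathfrak R_d$.

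The point where I expect the real work, and where the second part of the niceness assumption is essential, is the convergence of $\sum_k\delta_{k,n}$, $\sum_k\mathfrak b_{k,n}$ in the two-sided case. If these diverged, then on the active half-space $A$ the functions $\cR_n$ would grow without bound except on a slab shrinking to a hyperplane, hence $\mu$-negligible; the only possible cancellations with the other summands of $\cR_n$ occur across the finitely many other limiting breaklines; and since, by niceness, the set of $\partial A$-regular points — near which $\mu$ charges both open sides of $\partial A$ — cannot be covered by these finitely many hyperplanes, a portion of $A$ of positive $\mu$-mass near $\partial A$ still sees the blow-up, so $|\cR_n|\to\infty$ there and assumption (ii) forces $\int\cL(x,\cR_n(x))\,\dd\mu(x)\to\infty$, a contradiction. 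Making the last claim precise — tracking exactly how finitely many breaklines and the remaining bounded summands can or cannot absorb the divergent part on the regular set — is the main technical obstacle.

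For the final assertion one repeats the argument with the minimizing sequence chosen in $\mathfrak R_d^{\mathrm{strict}}$ and checks that the $\mu$-a.e.\ limit $\cR$ is again strict at dimension $d$. If infinitely many $\cR_n$ have dimension at most $d-1$, then along that subsequence the dimension-non-increasing construction gives $\dim\cR\le d-1$, so $\cR$ is strict. Otherwise assume every $\cR_n$ has dimension exactly $d$ and is discontinuous, hence carries at least one multiplicity-$2$ summand with non-zero affine jump across its breakline. Tracking such a summand through the construction: if it is discarded, absorbed into $\mathfrak a$, or merged into a group of size $\ge 2$ (or with a complementary one), then $m_0+\sum_k m_k$ strictly decreases and $\dim\cR\le d-1$; if it survives as the limit $\1_A(x)(\delta\cdot x+\mathfrak b)$ of a singleton group, then either the jump persists and $\cR$ is discontinuous, or it degenerates to zero and again $\dim\cR\le d-1$. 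In every case $\cR\in\mathfrak R_d^{\mathrm{strict}}$, so the infimum over $\mathfrak R_d^{\mathrm{strict}}$ is attained.
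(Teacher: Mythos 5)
Your overall architecture (minimizing sequence, subsequence extraction of normals, offsets and multiplicities, then lower semicontinuity plus Fatou) matches the paper's, and your handling of degenerate limiting half-spaces and of the strict case follows the same lines. But the step you yourself flag as ``the main technical obstacle'' --- convergence of the per-group coefficient sums $\sum_k\delta_{k,n}$ and $\sum_k\mathfrak b_{k,n}$ --- is a genuine gap, and the claim is in fact false as stated, so the blow-up argument you sketch for it cannot be completed. The coefficients of a group accumulating on one side of a limiting hyperplane $\partial A$ can diverge while $\cR_n$ stays bounded $\mu$-a.e.: the divergence on $A$ can be cancelled by the affine part $\mathfrak a_n$ (whose coefficients $p_n,q_n$ then also diverge, contrary to your separate boundedness claim for them), and the resulting divergence of $\mathfrak a_n$ on $\overline A^c$ is in turn cancelled there by a complementary group accumulating on $\overline A^c$. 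Concretely, $\mathfrak a_n(x)=-n x_1$ together with the summands $\1_{\{x_1>0\}}(x)(n x_1+\mathfrak b^+)$ and $\1_{\{x_1<1/n\}}(x)(n x_1+\mathfrak b^-)$ is a legitimate generalized response (type (b), dimension $4$) converging $\mu$-a.e.\ to a bounded jump function, while every individual coefficient you want to pass to the limit diverges. Since nothing blows up, assumption (ii) and niceness yield no contradiction; this cancellation across complementary half-spaces is precisely the ``twin'' phenomenon that occupies most of Step~3, case (c), of the paper's proof.

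The paper avoids the problem by never passing to the limit in the group sums. It cuts $\R^{d_\mathrm{in}}$ into the cells $A_J$ determined by all limiting hyperplanes, shows that on each positive-measure cell the affine data $(\cD_J^{(n)},\beta_J^{(n)})$ of $\cR^{(n)}$ converges (there divergence really does force $|\cR^{(n)}|\to\infty$ off a single hyperplane inside the cell, so (ii) and Fatou apply), and then defines the jump $(\delta_{A},\mathfrak b_{A})$ across each limiting hyperplane as the \emph{difference} of the cell data on its two sides, evaluated at a $\partial A$-regular point lying on no other limiting hyperplane --- this is where niceness actually enters, not where you invoke it. These differences converge automatically because each cell limit does. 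A second consequence of the same cancellation is that your ``short case check'' of (a)/(b)/(c) is not short: when the $\cR_n$ are of type (c) and a twin pair's coefficients diverge, the limit's affine part need not be constant and $m_0$ may be $0$, so one must prove that the normals of the twin half-spaces become linearly dependent in the limit in order to land in case (b). That requires a separate normalization argument (dividing by the largest $|\delta_k^{(n)}|$ and using convergence of $\cR^{(n)}$ on the cells), which your proposal does not contain.
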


\begin{proof}
			Let $(\cR^{(n)})_{n \in \N}$ be a sequence of generalized responses of at most dimension $d$ that satisfy
		$$
		\lim\limits_{n \to \infty} \int \cL(x,\cR^{(n)}(x)) \, \dd \mu (x) = \overline{\mathrm{err}}_d^{\cL}.
		$$
		We use the representations as in~(\ref{eq84529}) and write
		$$
		\cR^{(n)}(x)=\mathfrak a^{(n)}(x)+\sum_{k=1}^{K_n} \1_{A_k^{(n)}}\bigl(\delta_k^{(n)}\cdot x+\mathfrak b_k^{(n)}\bigr).
		$$
		Moreover, denote by $\mathfrak n_k^{(n)}\in \cO$ and $o_k^{(n)}\in\R$ the quantities with  $A_k^{(n)}=\{x\in\R^{d_\mathrm{in}}: \mathfrak n_k^{(n)}\cdot x > o_k^{(n)}\}$.
		
		\underline{1. step:} Deriving a limit admitting a representation~(\ref{eq84529}).
		
		We choose a  subsequence $(n_l)_{l \in \N}$ along which  always
		 the $K$-number and the multiplicities are the same and so that always the same  case (a), (b) or (c) enters. Moreover, we assume that 		for each $k=1,\dots,K$, $(\vecn_k^{(n_l)})_{l\in\N}$ converges in $\cO$ to  $\vecn_k$ and $(o_k^{(n_l)})_{l\in\N}$ in $\R\cup\{\pm\infty\}$ to $o_k$. For ease of notation we will assume that this is the case for the full sequence.
		
		We call
		$$
		A_k=\{x\in\R^{d_\mathrm{in}}:\vecn_k\cdot x>o_k\}
		$$ 
		the asymptotic active area of the $k$th term and let $H_k=\partial A_k$.		
		Let   $\IJ$ denote the collection of all subsets $J\subset \{1,\dots,K\}$ for which the set
		$$
		A_J=  \bigcap_{j\in J} A_j \cap \bigcap_{j\in J^c} \overline A_j^c
		$$
		satisfies $\mu(A_J)>0$. We note that the sets $(A_J:J\in\IJ)$ are non-empty, open and pairwise disjoint and their union has full $\mu$-measure
		since 
		$$
		\mu\Bigl(\R^{d_\mathrm{in}} \backslash \bigcup_{J \subset \{1, \dots, K\}} A_J \Bigr) \le \sum\limits_{j=1}^K \mu\bigl(  H_j \bigr)=0.
		$$ 
		Moreover, for every $J\in\IJ$ and every compact set $B$ with $B\subset A_J$ one has from a $B$-dependent~$n$ onwards that the generalized response $\cR^{(n)}$ satisfies for all $x\in B$ that
		$$
		\cR^{(n)}(x)= \cD_J ^{(n)} \cdot x +\beta_J^{(n)},
		$$
		where  
		$$
		\cD_J^{(n)}:= {\mathfrak a'}^{(n)}+\sum_{j\in J} \delta_j^{(n)} \text{ \ and \ } \beta_J^{(n)}:=\mathfrak a^{(n)}(0) +\sum_{j\in J} \mathfrak b_j^{(n)}.
		$$
		
		Let $J\in\IJ$. Next, we show that along an appropriate subsequence, we have convergence of $(\cD_J^{(n)})_{n \in \N}$  in $\R^{d_\mathrm{in}}$. First assume that along a subsequence  one has that $(|\cD_J^{(n)}|)_{n \in \N}$ converges to $\infty$. For ease of notation we  assume without loss of generality that one has  $|\cD_J^{(n)}|\to\infty$. We 
		let
		$$
		\cH_J^{(n)}= \{x\in\R^{d_\mathrm{in}} : \cD_J^{(n)}\cdot x+\beta_J=0 \}.
		$$ 
		For every $n$ with $\cD_J^{(n)}\not=0$, $\cH_J^{(n)}$ is a hyperplane which can be parametrized by taking a normal and the respective offset. As above we can argue that along an appropriate subsequence (which is again assumed to be the whole sequence) one has convergence of the normals in $\cO$ and of the offsets in $\bar \R$. We denote by $\cH_J$ the hyperplane being associated to the limiting normal and offset (which is assumed to be the empty set in the case where the offsets do not converge in $\R$). Since the norm of the gradient $\cD_J^{(n)}$ tends to infinity we get that for every $x\in A_J\backslash\cH_J$ one has
		$|\cR^{(n)}(x)|\to \infty$ and, hence, $\cL(x,\cR^{(n)}(x))\to \infty$. Consequently, Fatou implies that
		\begin{align*}
			\liminf\limits_{n \to \infty} \int_{A_J\backslash \cH_J}  \cL(x,\cR^{(n)}(x)) \, \dd \mu(x) &\ge \int_{A_J\backslash \cH_J} \liminf\limits_{n \to \infty} \cL(x,\cR^{(n)}(x)) \, \dd \mu(x) = \infty 
		\end{align*}
		contradicting the asymptotic optimality of $(\cR^{(n)})_{n \in \N}$. 
		We showed that the sequence $(\cD_J^{(n)})_{n \in \N}$ is precompact and by switching to an appropriate subsequence we can guarantee that  the limit $\cD_J=\lim_{n\to\infty}\cD_J^{(n)}$ exists.
		
	Similarly, we show that along an appropriate subsequence,  $(\beta_J^{(n)})_{n \in \N}$ converges to a value $\beta_J\in\R$. Suppose this were not the case, then there were a subsequence along which $|\beta_J^{(n)}|\to\infty$.  Again we assume for ease of notation that this were the case along the full sequence. Then, for every $x\in A_J$, one has  that $|\cR^{(n)}(x)|\to\infty$ and we argue as above to show that this would contradict the optimality of $(\cR^{(n)})_{n \in \N}$. 
	Consequently, we have on a compact set $B\subset A_J$ uniform convergence 
	\begin{align}\label{eq87314}
		\lim_{n\to\infty} \cR^{(n)}(x)= \cD_J \cdot x+\beta_J.
	\end{align}
	
	Since $\bigcup_{J\in\IJ} A_J$ has full $\mu$-measure we get  with the lower semicontinuity of $\cL$ in the second argument and Fatou's lemma that for every measurable function $\cR:\R^{d_{\mathrm{in}}}\to\R$ satisfying for each $J\in\IJ$ and $x\in A_J$ that
	\begin{align} \label{eq:DefR}
		\cR(x) = \cD_J\cdot x+\beta_J
	\end{align}
	we have
	\begin{align*}
		\int  \mathcal L(x,\cR(x))  \, \dd \mu(x) &\le  \int  \liminf\limits_{n \to \infty} \cL(x,\cR^{(n)}(x))  \, \dd \mu (x) \\
		&\le  \liminf\limits_{n \to \infty} \int  \cL(x,\cR^{(n)}(x))  \, \dd \mu(x)=\overline{\mathrm{err}}_{d}^{\cL}.
	\end{align*}

	We call a summand $j \in \{1, \dots, K\}$ degenerate if $A_j$ or $\overline {A}_j^c$ has $\mu$-measure zero.	
	Now, let $j$ be a non-degenerate summand. Since $\mu$ is nice there exists a $\partial A_j$-regular point $x$ that is not in $\bigcup_{A\in\IA:\partial A\not=\partial A_j} \partial A$, where $\IA := \{A_i: i \text{ is non-degenerate}\}$. 
	We let 
	$$
	J^x_-= \{i: x\in A_i\} \cup \{i: \overline A_i^c =A_j\} \text{ \ \ and \ \ } J^x_+=\{i: x\in A_i\}\cup\{i: A_i=A_j\}.
	$$ 
	Since $x\in \mathrm{supp}(\mu|_{\overline A_j^c})$ we get that the cell $A_{J^x_-}$ has strictly positive $\mu$-measure so that $J^x_-\in \IJ$. Analogously, 
	$x\in \mathrm{supp}(\mu|_{A_j})$ entails that $J^x_+\in\IJ$. (Note that $J_+^x$ and $J_-^x$ are just the cells that lie on the opposite sides of the hyperplane $\partial A_j$ at $x$.)
	We thus get that 
	$$
	\delta_{A_j}^{(n)}:= \sum_{i:A_i=A_j} \delta_i^{(n)} - \sum_{i:\overline{A}_i^c=A_j} \delta_i^{(n)} = \cD_{J_+^x}^{(n)}-\cD_{J_-^x}^{(n)}\to \cD_{J_+^x}-\cD_{J_-^x}=: \delta_{A_j},
	$$
	where the definitions of $\delta_{A_j}^{(n)}$ and $\delta_{A_j}$ do not depend on the choice of $x$. Analogously,
	$$
	\mathfrak b_{A_j}^{(n)}:= \sum_{i:A_i=A_j} \mathfrak b_{i}^{(n)}-\sum_{i:\overline{A}_i^c=A_j} \mathfrak b_{i}^{(n)} = \beta_{J_+^x}^{(n)}-\beta_{J_-^x}^{(n)} \to  \beta_{J_+^x}-\beta_{J_-^x}
	=: \mathfrak b_{A_j}.
	$$
We form the set $\IA_0$ by thinning  $\IA$ in such a way that for two active areas that share the same hyperplane as boundary only one is kept (meaning that for two active areas on opposite sides of a hyperplane only one is kept).
Then there exists an affine function $\mathfrak a:\R^{d_\mathrm{in}}\to\R$ such that for $x\in \bigcup_{J\in\IJ} A_J$
\begin{align}\label{eq82462}
\cR(x)= \mathfrak a(x) +\sum_{A\in\IA_0} \1_A(x) (\delta_A\cdot x+\mathfrak b_A).
\end{align}
We use the latter identity to define $\cR$ on the whole space $\R^{d_\mathrm{in}}$.

\underline{2. step:} Analyzing the multiplicities of  representation~(\ref{eq82462}). 

We assign the active areas in  $A\in\IA_0$ multiplicities. 
If
$$ \partial A \subset \{x\in\R^{d_\mathrm{in}} :\delta_A\cdot x+\mathfrak b_A=0\},$$
or, equivalently, $x\mapsto \1_A(x)(\delta_A\cdot x+\mathfrak b_A)$ is continuous,
then we assign $A$ the multiplicity one and otherwise two.

Next, we show that an active area $A\in\IA_0$ whose breakline is only served by one summand~$k$ of multiplicity one is again assigned multiplicity one. 
 For this it remains to show continuity of
    $x\mapsto \1_A(x)(\delta_A\cdot x+\mathfrak b_A)$ for such an $A$. Suppose that the $k$th summand is the unique summand that contributes to  $A$.   
    Then $\delta_k^{(n)}=\delta_A^{(n)}\to \delta_A$ and $\mathfrak b_k^{(n)}=\mathfrak b^{(n)}_A\to \mathfrak b_A$. Moreover, one has
    $$
    \{x\in\R^{d_\mathrm{in}}:\mathfrak n_k^{(n)}\cdot x- o_k^{(n)}=0\}\subset  \{x\in\R^{d_\mathrm{in}}:\delta _k^{(n)}\cdot x+ \mathfrak b_k^{(n)}=0\}
    $$
    which entails that, in particular, $\delta_k^{(n)}$ is a multiple of $\mathfrak n_k^{(n)}$. Both latter vectors converge and $|\mathfrak n_k|=1$ which also entails that the limit $\delta_A$ is a multiple of $\mathfrak n_k$.  To show that
    $$
    \partial A\subset \{x\in\R^{d_\mathrm{in}}:\delta _A\cdot x+ \mathfrak b_A=0\},
    $$
  it thus  suffices to  verify that one point of the hyperplane on the left-hand side lies also in the set on the right-hand side. Indeed, this is the case for  $x=o_k \vecn_k$ since for $n\to\infty$
 $$
  \delta_A\cdot (o_k\vecn_k)+\mathfrak b_A =\lim_{n\to\infty} \delta_k^{(n)}\cdot (o_k^{(n)}\vecn_k^{(n)}) +\mathfrak b_k^{(n)}=0.
 $$
 This entails contains continuity and we also showed that for such a $k$,  $\1_{A_k^{(n)}}(x)(\delta_k^{(n)}\cdot x+\mathfrak b_k^{(n)})\to \1_{A}(\delta_A\cdot x+\mathfrak b_A)$ pointwise in $x$.

Note that the sum over all  multiplicities assigned to the active areas $A\in\IA_0$ is strictly smaller than $d$ if
\begin{enumerate}
\item for all $n\in\N$, $m_0^{(n)}=1$,
\item there is a degenerate asymptotic active area, or
\item there is an active area $A\in\IA_0$ whose contributing terms have a cumulated multiplicity that is strictly larger  than the one assigned to $A$. 
\end{enumerate}
In the latter cases we can choose $m_0=1$ and $\cR$ as in~(\ref{eq82462}) is of dimension at most $d$.

\underline{3. step:} Separate treatment of the cases where the $\cR^{(n)}$ are of type (a), (b) or (c).

In case (a), we are done since property (1) above is satisfied.

Now suppose that the $\cR^{(n)}$ are all of type  (b) and let $\II\subset\{1,\dots,K\}$ denote the indices  of the summands  with multiplicity two. Then linear dependence of 
$(\vecn_j^{(n)}:j\in\II)$ implies linear dependence of the limits $(\vecn_j:j\in \II)$. If there is no degenerate asymptotic active area and the entries of $(\partial A_j:j\in\II)$ are pairwise different, then the representation~(\ref{eq82462}) satisfies property (b) and we verified that $\cR$ is in $\mathfrak R_d$. Conversely, in the case that the entries in  $(\partial A_j:j\in\II)$  are not pairwise different, then there is an active area $A\in\IA_0$ whose contributing summands contribute multiplicity at least four and  thus property (3) above holds and we are done.

It remains to consider the case (c). 
We can assume that there are no degenerate active areas and that every $A\in\IA_0$ is served by terms of total multiplicity at most two since otherwise  property (2) or (3) from above holds and we are done.
Then every $A\in\IA_0$ is served by 
\begin{enumerate}\item[(i)] a single summand,
\item[(ii)] two summands of multiplicity one that have the same asymptotic active area or
\item[(iii)] two summands of multiplicity one that have their asymptotic areas on opposite sides of the related hyperplane.
\end{enumerate}
For an $A\in \IA_0$ of type (i) or (ii) the asymptotic contribution of the related summands satisfies  outside the hyperplane $\partial A$ 
$$
\lim_{n\to\infty} \sum_{j:A_j=A} \1_{A_j^{(n)}}(x) (\delta_j^{(n)}\cdot x+\mathfrak b_j^{(n)})= \1_A(x)(\delta_A\cdot x+\mathfrak b_A).
$$
If there is a single summand of multiplicity one contributing the limit will be continuous for the same reason as above.  Hence, if there exists no $A \in \IA_0$ of type (iii) we use that $(\cR^{(n)})_{n \in \N}$ converges on $\bigcup_{J \in \IJ}A_J$ in order to deduce that $\lim_{n \to \infty} \mathfrak b^{(n)} =: \mathfrak b$ exists and obtain a representation (\ref{eq84529}) with $\mathfrak a \equiv \mathfrak b$ and are in case (c).

Now let $\IA_0^*$ denote the subset of all $A\in \IA_0$ that are of type (iii)  and assume that $\IA_0^* \neq \emptyset$.
 We say that $A\in \IA_0^*$ is served by the pair of twins $(i,j)$ if $A_i=A$ and $A_j=\overline A^c$. Moreover, we call a summand $k$ to be of type (iii) if it contributes to one active area of type (iii). By switching to an appropriate subsequence we can ensure that there exists a summand $k$ of type (iii) such that  $|\delta_{k}^{(n)}|$ is maximal over all summands  of type (iii) for all $n\in\N$.

We distinguish two cases. If there is a subsequence along which $(|\delta_k^{(n)}|)_{n \in \N}$ is uniformly bounded, then again by switching to appropriate subsequences we get that for every summand $j$ of type (iii), 
$\lim_{n\to\infty} \delta_j^{(n)}=:\delta_j\in\R^{d_\mathrm{in}}$ exists.
 Since $\1_{A_j^{(n)}(x)}(\delta_j^{(n)}\cdot x+\mathfrak b_j^{(n)})$ is continuous we get for a sequence $(x_n)_{n \in \N}$ that satisfies $x_n \in \partial A_j^{(n)}$, for all $n \in \N$, and that converges to an $x \in \partial A_j$ that
$$
\mathfrak b_j^{(n)}=-\delta_j^{(n)}\cdot x_n \to -\delta_j\cdot x =: \mathfrak b_j,
$$
where the left-hand side does not depend on the choice of $(x_n)_{n \in \N}$ or $x$. Moreover, the limit $\1_{A_j}(x)(\delta_j\cdot x+\mathfrak b_j)$ is continuous.
 Thus, for a pair of twins $(i,j)$  the contributing terms $i$ and $j$ to $A_i$ have in total multiplicity  two but the respective term 
	$$
	\1_{A_i}(x)(\delta_{A_i} \cdot x + \mathfrak b_{A_i})= \1_{A_i}(x)((\delta_i-\delta_j) \cdot x + \mathfrak b_i-\mathfrak b_j)
	$$
	is continuous and thus has multiplicity one. Hence, we are in case (3) above and are done.

It remains to consider the case where $(|\delta_k^{(n)}|)_{n \in \N}$ tends to infinity.
For every twin $(i,j)$ and $n\in\N$ we can choose $\alpha_i^{(n)} \in [-1,1]$ and $\alpha_j^{(n)}\in [-1,1]$ with
$$
\frac 1{|\delta_k^{(n)}|} \delta_i^{(n)} = \alpha_i^{(n)}\vecn_i^{(n)}\text{ \ and \ } \frac 1{|\delta_k^{(n)}|} \delta_j^{(n)} = - \alpha_j^{(n)}\vecn_j^{(n)}
$$
and along an appropriate subsequence we have convergence of all $(\alpha_i^{(n)})_{n \in \N}$ and  $(\alpha_j^{(n)})_{n \in \N}$ to limits $\alpha_i\in[-1,1]$ and $\alpha_j\in[-1,1]$.
Since $(\delta_i^{(n)}-\delta_j^{(n)})_{n\in\N}$ converges to $\delta_{A_i}$, $|\delta_k^{(n)}|\to\infty$ and $$\lim_{n\to\infty} \vecn_i^{(n)}= \vecn_i=-\vecn_j=\lim_{n\to\infty} -\vecn_j^{(n)}$$ we get that $\alpha_i=\alpha_j$. Consequently, for  $x\in \bigcup_{J\in\IJ} A_J$, one has
$$
\lim_{n\to\infty} \frac 1{|\delta_k^{(n)}|} (\cR^{(n)}(x) - \mathfrak b^{(n)}) = \sum_{i:A_i\in \IA_0^*}  \alpha_i (\vecn_i\cdot x-o_i).
$$
If $\sum_{i:A_i\in \IA_0^*} \alpha_i \vecn_i$ is not equal to zero, then the linear term on the right-hand side does not vanish. This contradicts convergence of $\cR^{(n)}$ on $\bigcup_{J\in\IJ}A_J$. Hence, we have $\sum_{i:A_i\in \IA_0^*} \alpha_i \vecn_i=0$. Note that not all $\alpha_i$'s are equal to zero since $\alpha_{k}\in\{\pm1\}$ and either $k$ or its twin appears in the sum.
Thus we showed that the normals belonging to the active areas in $\IA_0^*$ are linearly dependent. Hence, we are in case (b) and the proof is achieved.

\underline{4. step:} Discussion of the minimization  problem for strict responses at dimension $d$.

Now we choose a sequence of responses $(\cR^{(n)})_{n\in\N}$ from $\mathfrak R_d^\mathrm{strict}$ with
$$
\lim_{n\to\infty} \int \cL(x,\cR^{(n)}(x))\,\dd \mu(x)= \inf_{\tilde \cR\in\mathfrak R_d^\mathrm{strict}} \int \cL(x,\tilde\cR(x))\,\dd \mu(x).
$$
If infinitely many of the responses $\cR^{(n)}$ are in $\mathfrak R_{d-1}$ then
the response constructed above is a minimizer and  in $\mathfrak R_{d-1}$. Conversely, if all but finitely many responses are discontinuous, then the construction  above yields a limit $\cR\in\mathfrak R_d$ that minimizes the error.
Note that  there is at least one summand that contributes multiplicity two to one of the active areas of the limit $\cR$. If $\cR$ is continuous along the respective hyperplane, then it is of dimension at most $d-1$ and otherwise the response is discontinuous. In both cases we have $\cR\in\mathfrak R_d^\mathrm{strict}$. 
\end{proof}

\section{Strict generalized responses are not better than representable ones}
We will show that, in the setting of Theorem~\ref{thm:main2}, for every $d\in\{2,3,\dots\}$  with $\mathrm{err}_d^\cL<\mathrm{err}_{d-1}^\cL$, the infimum taken over the strict generalized responses produces a larger error than the best representable response. This will entail Theorem~\ref{thm:main2}. For a discussion on sufficient and necessary assumptions on the loss function $\cL$ that implies $\err_d^{\cL}<\err_{d-1}^{\cL}$ see Proposition~3.5 and Example~3.7 in \cite{DJK22}.

\begin{prop}\label{prop:minimal}
Suppose that the assumptions of Theorem~\ref{thm:main2} are satisfied.
Let $d\in\N_0$. Then there exists an optimal network $\IW\in\cW_d$ with 
$$
\mathrm{err}^\cL(\IW)=\mathrm{err}_d^\cL=\overline{\mathrm{err}}_d^\cL.
$$ 
If, additionally, $d\ge2$ and $\mathrm{err}^\cL_d<\mathrm{err}^\cL_{d-1}$,
then one has that
\begin{align}\label{eq8246}
\inf_{\cR \in \mathfrak R_d^{\mathrm{strict}}} \int \cL(x,\cR(x))\,\dd \mu(x)>\mathrm{err}_d^\cL.
\end{align}
\end{prop}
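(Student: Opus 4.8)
The plan is to derive both assertions of Proposition~\ref{prop:minimal} from Proposition~\ref{prop:genres} by supplying a dimension-preserving way of ``smoothing'' a discontinuity into two ReLU neurons, together with a strict-convexity argument that makes such a smoothing strictly profitable. First I would check that the hypotheses of Theorem~\ref{thm:main2} imply those of Proposition~\ref{prop:genres}: a finite real-valued convex function is continuous, so $\cL(x,\cdot)$ is lower semicontinuous, and if it is moreover strictly convex with a minimizer then its one-sided derivatives are strictly increasing and nonzero away from that minimizer, forcing $\cL(x,y)\to\infty$ as $|y|\to\infty$; the measure $\mu$ is nice because hyperplanes are $\mu$-null ($h$ being a density) and because, whenever $\mu(A),\mu(\overline A^c)>0$ for an open half-space $A$, the hyperplane $\partial A$ meets the interior of $\Co(\ID)$, so the hyperplane hypothesis gives a point of $\partial A$ where $h>0$, hence a relatively open set of $\partial A$-regular points that no finite family of hyperplanes other than $\partial A$ can cover; and $\mathrm{err}_0^\cL<\infty$ by compactness of $\ID$ and continuity of $x\mapsto\cL(x,y)$. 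Thus Proposition~\ref{prop:genres} applies for all $d$; write $\overline{\mathrm{err}}_d^\cL$ for the attained minimum of the error over $\mathfrak R_d$, and note $\overline{\mathrm{err}}_d^\cL\le\mathrm{err}_d^\cL$ since every network response restricted to $\ID$ is a representable generalized response of dimension $\le d$.

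For the smoothing device, fix a minimal representation~(\ref{eq84529}) of a discontinuous $\cR\in\mathfrak R_d$; its multiplicity-two summands $\1_{A_k}(x)(\delta_k\cdot x+\mathfrak b_k)$, $A_k=\{\vecn_k\cdot x>o_k\}$, are exactly its discontinuous ones, and each equals $(\delta_k\cdot\vecn_k)(\vecn_k\cdot x-o_k)^++\1_{A_k}(x)j_k(x)$ with $j_k$ affine and $\nabla j_k\perp\vecn_k$. I would replace the jump $\1_{A_k}(x)j_k(x)$ by $\bigl(j_k(x)+M(\vecn_k\cdot x-o_k)\bigr)^+-\bigl(M(\vecn_k\cdot x-o_k)\bigr)^+$: the second ReLU has breakline $\partial A_k$ and merges with the kink term, so the replacement of the whole (multiplicity-two) summand costs exactly two neurons, while an affine part of $\mathfrak a$ costs one neuron on the compact set $\ID$ via $v\cdot x+e=(v\cdot x+e+M')^+-M'$ for large $M'$. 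This yields $\IW_M\in\cW_d$ whose response tends to $\cR$ pointwise off the $\mu$-null union of breaklines and is bounded on $\ID$ uniformly in $M$, so dominated convergence (convex functions are locally bounded) gives $\int\cL(x,\mathfrak N^{\IW_M}(x))\,\dd\mu\to\int\cL(x,\cR(x))\,\dd\mu$, whence $\mathrm{err}_d^\cL\le\int\cL(x,\cR(x))\,\dd\mu$; so the error infimum over discontinuous elements of $\mathfrak R_d$ is $\ge\mathrm{err}_d^\cL$ and $\overline{\mathrm{err}}_d^\cL=\mathrm{err}_d^\cL$. For attainment by a network I would induct on $d$ (trivial for $d\le1$, where $\mathfrak R_d$ contains only representable responses): if $\mathrm{err}_d^\cL=\mathrm{err}_{d-1}^\cL$, pad an optimal $(d-1)$-neuron network; if $\mathrm{err}_d^\cL<\mathrm{err}_{d-1}^\cL$, the argument below shows discontinuous elements of $\mathfrak R_d$ have error $>\mathrm{err}_d^\cL$, so the $\mathfrak R_d$-minimizer is representable of dimension $\le d$, i.e.\ the response of some $\IW\in\cW_d$.

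For the strict inequality~(\ref{eq8246}) (so $d\ge2$ and $\mathrm{err}_d^\cL<\mathrm{err}_{d-1}^\cL$) let $\cR^*$ attain the infimum over $\mathfrak R_d^{\mathrm{strict}}$; by the previous step its error is $\ge\mathrm{err}_d^\cL$ and I must rule out equality. If $\cR^*$ has dimension $\le d-1$ then $\cR^*\in\mathfrak R_{d-1}$, so its error is $\ge\overline{\mathrm{err}}_{d-1}^\cL=\mathrm{err}_{d-1}^\cL>\mathrm{err}_d^\cL$. Otherwise $\cR^*$ is discontinuous of dimension $d$; assuming, for contradiction, that its error equals $\mathrm{err}_d^\cL=\overline{\mathrm{err}}_d^\cL$, so that $\cR^*$ minimizes over all of $\mathfrak R_d$, I would build a strictly better competitor in $\mathfrak R_d$. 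Choose a breakline $H$ on which $\cR^*$ genuinely jumps; by the hyperplane hypothesis $h>0$ on a relatively open part of $H$ where the affine jump height $j$ is also nonzero, so the one-sided limits $v_-,v_+$ of $\cR^*$ along $H$ satisfy $v_-\ne v_+$ on a set of positive surface measure with $h>0$. Replacing the jump by a steep piecewise-affine ramp from $v_-$ to $v_+$ of width $\sim M^{-1}$, laid on the $v_-$-side resp.\ the $v_+$-side, changes the error to leading order by $M^{-1}\int_H h\,|j|\,(\overline{\cL}-\cL(\cdot,v_-))\,\dd\sigma$ resp.\ the same with $v_+$, where $\overline{\cL}(x)=\int_0^1\cL\bigl(x,(1-t)v_-(x)+tv_+(x)\bigr)\,\dd t$ and $\dd\sigma$ is surface measure on $H$. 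Strict convexity gives $2\overline{\cL}(x)-\cL(x,v_-(x))-\cL(x,v_+(x))<0$ wherever $v_-(x)\ne v_+(x)$, so the two leading coefficients have strictly negative sum; hence one of the two ramps strictly lowers the error for large $M$ while, by the smoothing device, keeping the dimension equal to $d$, contradicting optimality of $\cR^*$ over $\mathfrak R_d$.

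The step I expect to be the main obstacle is reconciling ``put the ramp on the prescribed side of $H$'' with ``keep the dimension at $d$'' when the jump height $j_k$ is nonconstant along $\partial A_k$ and the corresponding summand also carries a kink: then no piecewise-affine ramp interpolates linearly, the two economical two-neuron ramps smooth on opposite sides only where $j_k$ has a fixed sign, and the ``mirror'' ramp introduces a spurious affine neuron. I would handle this via the case distinction (a)/(b)/(c) of Definition~\ref{def:genres} --- in case~(a) the extra affine neuron is absorbed by the existing $m_0=1$, and in cases~(b) and~(c) one invokes the first-order optimality of $\cR^*$ within $\mathfrak R_d^{\mathrm{strict}}$ (under sliding, tilting and re-biasing of the breakline and under perturbations of $\delta_k$, all of which preserve discontinuity) to show that the unfavourable sign contributions integrate to zero, so that the dimension-neutral two-neuron ramp of the smoothing device is already profitable. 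Carrying out this bookkeeping, together with the leading-order error expansions used above, is the technical heart of the proof.
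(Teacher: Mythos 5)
Your overall architecture matches the paper's: the verification that the hypotheses of Theorem~\ref{thm:main2} imply those of Proposition~\ref{prop:genres} is essentially the paper's argument word for word, the induction on $d$ with the dichotomy ``either the optimal strict response has error at least $\overline{\mathrm{err}}_{d-1}^{\cL}$, or it is strictly beaten inside $\mathfrak R_d$'' is the paper's, and the two-sided ramp combined with strict convexity (the inequality $2\overline L_j<L_j^++L_j^-$ on the jump set, so that at least one of the two orientations wins) is exactly the mechanism behind the paper's comparison of $\cR^{\kappa}$ and $\cR^{-\kappa}$.

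The genuine gap is case (b) of Definition~\ref{def:genres}, which you correctly flag as ``the main obstacle'' but do not resolve. Your smoothing device replaces a single multiplicity-two summand by two multiplicity-one summands with no residual linear term. This is dimension-neutral only in cases (a) and (c): when $m_0=0$ and $\mathfrak a$ is non-constant, the representation is admissible at dimension $d$ \emph{only because} the multiplicity-two normals are linearly dependent, and demoting one summand of a minimal dependent family to multiplicity one destroys property (b) while (a) and (c) still fail; your competitor, and likewise your approximating network $\IW_M$, then lives in $\mathfrak R_{d+1}$, resp.\ $\cW_{d+1}$, not in $\mathfrak R_d$, resp.\ $\cW_d$. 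Consequently neither the claim $\overline{\mathrm{err}}_d^{\cL}=\mathrm{err}_d^{\cL}$ in your second paragraph nor the step ``one of the two ramps strictly lowers the error while keeping the dimension equal to $d$'' in your third paragraph is established for type-(b) responses. Your proposed repair --- first-order optimality under sliding, tilting and re-biasing forcing the unfavourable contributions to integrate to zero --- is not worked out and does not address the actual difficulty, which is the \emph{admissibility} of the perturbed response rather than the sign of a boundary integral. The paper's resolution is structural: pass to a minimal linearly dependent subfamily $(\vecn_j)_{j\in\II}$ with $\sum_{j\in\II}\alpha_j\vecn_j=0$ and all $\alpha_j\neq 0$, write each of these summands two-sidedly with data $\delta_j^{\pm},\mathfrak b_j^{\pm}$ (absorbing the affine part $\mathfrak a$ into the inactive sides), and tilt \emph{all} of them simultaneously with slopes $\kappa\alpha_j(\vecn_j\cdot x-o_j)$, so that the divergent linear corrections cancel exactly and $\cR^{\pm\kappa}$ are genuine $d$-neuron network responses; the convexity gain is then summed over all breaklines of the family, with at least one strictly negative term coming from a hyperplane where the response genuinely jumps. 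Without this coordinated perturbation the argument does not close.
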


\begin{proof} We can assume without loss of generality that $\mu \neq 0$.
	First we verify  the assumptions of Proposition~\ref{prop:genres} in order to conclude that there are generalized responses $\cR\in\mathfrak R_d$ with 	$$
		\int \cL(x,\cR(x))\,\dd \mu(x)=\overline{\mathrm{err}}_d^\cL.
	$$
	We verify that  $\mu$ is a nice measure: In fact, since $\mu$ has Lebesgue density $h$, we have $\mu(H)=0$ for all hyperplanes $H \subset \R^{d_{\mathrm{in}}}$. Moreover, for every half-space $A$ with $\mu(A),\mu(\overline A^c)>0$ we have that $\partial A$ intersects the interior of the convex hull of $\ID$ so that there exists a point $x \in \partial A$ with $h(x)>0$. Since $\{x \in \R^{d_{\mathrm{in}}}:h(x)>0\}$ is an open set, $\{x \in \partial A: h(x)>0\}$ cannot be covered by finitely many hyperplanes different from $\partial A$. Moreover, since for all $x \in \R^{d_{\mathrm{in}}}$ the function $y \mapsto \cL(x,y)$ is strictly convex and attains its minimum we clearly have for fixed $x\in\R^{d_\mathrm{in}}$ continuity of $y \mapsto \cL(x,y)$ and 
	$$
	\lim\limits_{|y| \to \infty} \cL(x,y) =\infty.
	$$
	
 	We prove the  statement via induction over the dimension $d$.
		If $d\le 1$, all generalized responses of dimension $d$ are, on the compact set $\ID$, representable by a neural network and we are done. Now let $d\ge 2$ and suppose that $\cR$ is a \emph{strict} generalized response at dimension $d$ that satisfies
		$$
			\int \cL(x, \cR(x))\,\dd\mu(x)= \inf_{\tilde \cR \in \mathfrak R_d^{\mathrm{strict}}}  \int \cL(x, \tilde\cR(x))\,\dd\mu(x).
		$$
	It suffices to show that one of the following two cases enters: one has 
		\begin{align}\label{eq8362-1}
			\int\cL(x,\cR(x))\,\dd\mu(x) \ge \overline{\mathrm{err}}_{d-1}^\cL
		\end{align} or
		\begin{align}\label{eq8362-2}
			\int\cL(x,\cR(x))\,\dd\mu(x)> \overline{\mathrm{err}}_d^\cL.
		\end{align}
		Indeed, then in the  case that (\ref{eq8362-2}) does not hold, we have  as consequence of (\ref{eq8362-1}) that
		$$
		\overline{\mathrm{err}}_{d-1}^\cL \le \int\cL(x,\cR(x))\,\dd\mu(x) =\overline{\mathrm{err}}_d^\cL 
		$$
		and the induction hypothesis entails that  $\mathrm{err}_{d-1}^\cL=\overline{\mathrm{err}}_{d-1}^\cL \le \overline{\mathrm{err}}_{d}^\cL\le \mathrm{err}_{d}^\cL\le \mathrm{err}_{d-1}^\cL$ so that $ \mathrm{err}_{d}^\cL= \overline{\mathrm{err}}_{d}^\cL$ and $\mathrm{err}_{d}^\cL= \mathrm{err}_{d-1}^\cL$. 
		Thus, an optimal representable response $\cR$ of dimension at most  $d-1$ (which exists by induction hypothesis) is also optimal when taking the minimum over all generalized responses of dimension $d$ or smaller. 
		Conversely, if~(\ref{eq8362-2}) holds, an optimal generalized response (which exists by  Proposition~\ref{prop:genres}) is representable  so that, in particular, $\mathrm{err}_{d}^\cL= \overline{\mathrm{err}}_{d}^\cL$. 
		This shows that there always exists an optimal representable response.
		Moreover, it also follows that in the case where $\mathrm{err}_{d}^\cL< \mathrm{err}_{d-1}^\cL$, either of the properties~(\ref{eq8362-1}) and~(\ref{eq8362-2})  entail property (\ref{eq8246}).
	
		Suppose that the optimal strict generalized response $\cR$ at dimension $d$ is given in the standard representation  (\ref{eq84529}) 
			$$
			\cR(x)= \mathfrak a(x)+ \sum_{j=1}^K \1_{A_j}(x) \bigl(\delta_{j}\cdot x+\mathfrak b_{j}\bigr),
			$$
			with $A_1,\dots,A_K$ being the half-spaces with pairwise distinct boundaries and $m_0,m_1,\dots,m_K$ being the respective multiplicities. 
			Suppose that $\cR$ is an optimal response with the minimal number of terms of multiplicity two.
	
		First note that for every $k=1,\dots,K$ for which $\partial A_k$ does not intersect the interior of the convex hull of $\ID$, $A_k$ has either zero or full $\mu$-measure. In both cases we can remove the $k$th summand, set $m_0=1$ and adapt the affine function $\mathfrak a$ appropriately to get to a response that agrees $\mu$-almost everywhere with the former response and is again of dimension at most $d$. Thus we can without loss of generality assume that for every $k=1,\dots,K$, $\partial A_k$ intersects the interior of the convex hull of $\ID$.
			
		We distinguish cases. If one has $m_0=1$  (case (a)) or $\mathfrak a\equiv\mathfrak b\in\R$ (case (c)), then the proof of Proposition~3.3 in \cite{DJK22} shows that an appropriate replacement of a summand of multiplicity two by two summands of multiplicity one reduces the error which shows (\ref{eq8362-2}).
		
		It remains to treat the case (b). Let $\II$ be the set of indices with multiplicity two. Now we have that the vectors $(\vecn_j:j\in\II)$ are linear dependent. 
	If the set is not minimal in the sense that we can remove one of the vectors and still have a linearly dependent set, then we can argue as above and apply an appropriate replacement of this particular summand by two summands of multiplicity one that still satisfies (b) and has strictly smaller error. Hence, we can assume without loss of generality that the set $(\vecn_j:j\in\II)$
	is minimal in the sense that for a nontrivial linear combination 
	\begin{align}\label{eq246283}
	\sum_{j\in\II} \alpha_j \vecn_j=0
	\end{align}
	one has that $\alpha_j\not=0$ for all $j\in\II$. 
	For $x\in\ID$ and $y\in\R$ let
	 $$
	 \tilde \cL(x,y)=\cL\Bigl(x,y+\sum_{j\in\II^c}\1_{A_j}(x)( \delta_j \cdot x+\mathfrak b_j  )\Bigr)
	 $$
	 and note that due to continuity of $x\mapsto \sum_{j\in\II^c}\1_{A_j}(x)( \delta_j \cdot x+\mathfrak b_j  )$ the function $\tilde \cL$ satisfies the same assumptions  as imposed on $\cL$ in the proposition. If we can find
	  a representable response~$\hat\cR$  of dimension $2\#\II$ with
	$$
	\int \tilde \cL(x, \hat \cR(x))\,\dd\mu(x)< \int \tilde \cL(x, \tilde \cR(x))\,\dd\mu(x),
	$$
	where 
	$$
		\tilde \cR(x) = \mathfrak a(x) + \sum_{j \in \II} \1_{A_j}(x) (\delta_j \cdot x + \mathfrak b_j),
	$$
	 then
	 $$
	 \int  \cL\Bigl(x, \hat \cR(x)+\sum_{j\in \II^c} \1_{A_j}(x)( \delta_j \cdot x+\mathfrak b_j  )\Bigr)\,\dd\mu(x)< \int  \cL(x,  \cR(x))\,\dd\mu(x)
	 $$
	 and it follows validity of~(\ref{eq8362-2}).

	Therefore, we can assume without loss of generality that 
	$\II=\{1,\dots,K\}$ and the optimal strict response is given by
	$$
	\cR(x)=\mathfrak a(x)+\sum_{j=1}^K \1_{A_j}(x)( \delta_j \cdot x+\mathfrak b_j).
	$$
	We fix a  non-trivial vector $(\alpha_j)_{j \in \II}$ satisfying~(\ref{eq246283}) and choose $(\delta_j^+,\delta_j^-) \in \R^{d_{\mathrm{in}}} \times \R^{d_{\mathrm{in}}}$, $(\mathfrak b_j^+,\mathfrak b_j^-)\in \R^2$ and $\mathfrak b\in\R$ with $\delta_j= \delta_j^+-\delta_j^-$ and $\mathfrak b_j=\mathfrak b_j^+-\mathfrak b_j^-$ such that for all $x \in \R^{d_\mathrm{in}}$
	$$
	\cR(x)= \mathfrak b+ \sum_{j=1}^K \1_{A_j}(x)(\delta_j^+\cdot x+\mathfrak b_j^+)+\1_{A_j^c}(x)(\delta_j^-\cdot x+\mathfrak b_j^-).
	$$
	By switching the sides of the active areas we can assume without loss of generality that all $\alpha_j>0$ (indeed this will change the definition of $\cR$ only on the respective hyperplanes).
	We will replace $\cN_j(x):=\1_{A_j}(x)(\delta_j^+\cdot x+\mathfrak b_j^+)+\1_{A_j^c}(x)(\delta_j^-\cdot x+\mathfrak b_j^-)$ by 
	$$
	\cN_j^\kappa(x):=(\delta_j^+\cdot x+\mathfrak b_j^++\kappa\alpha_j(\vecn_j\cdot x-o_j))^+-(-\delta_j^-\cdot x-\mathfrak b_j^--\kappa\alpha_j(\vecn_j\cdot x-o_j))^+,
	$$
	where $\kappa>0$. 
	Let
	$$
		Q_j^\kappa := \{x \in \R^{d_{\mathrm{in}}}: \cN_j^\kappa(x) \neq \cN_j(x)+\kappa \alpha_j (\vecn_j \cdot x - o_j)\}
	$$
	and compare $\cR$ with
	$$
	\cR^\kappa(x):=\mathfrak b^\kappa+\sum_{j=1}^K \cN_j^\kappa(x),
	$$
	 where $\mathfrak b^{\kappa} := \mathfrak b + \sum_{j=1}^K \kappa \alpha_j o_j$. Note that since $\sum_{j=1}^K \alpha_j \vecn_j =0$ we have $\cR^\kappa=\cR$, on $\R^{d_{\mathrm{in}}} \backslash (\bigcap_{j=1, \dots, K} Q_j^\kappa)$.
	Furthermore, the set $\{x \in \R^{d_{\mathrm{in}}}:x\text{ is in two }Q_j^\kappa\}\cap\ID$ is of size $\cO(\kappa^{-2})$. Hence, 
	\begin{align}\label{eq935621}
		\int \cL(x,\cR^\kappa(x))-\cL(x,\cR(x)) \,\dd\mu(x)=\sum_{j=1}^K \int_{ Q_j^\kappa} h(x) ( \cL(x,\cR^\kappa(x))-\cL(x,\cR(x))) \, \dd x + \cO(\kappa^{-2}).
	\end{align}
	Moreover, using the uniform continuity of $h$ on the compact set $\ID$ and the uniform boundedness of $|\cL(x,\cR^\kappa(x))-\cL(x,\cR(x)))|$ over all $x\in\ID$ and $\kappa\ge1$ we conclude that as $\kappa\to\infty$
	\begin{align}\begin{split}\label{eq73461}
	\int_{ Q_j^\kappa} & h(x) ( \cL(x,\cR^\kappa(x))-\cL(x,\cR(x))) \, \dd x\\
	&=\int _{H_j} \int_{  (x'+\R\vecn_j)\cap  Q_j^\kappa}  h(z)\bigl ( \cL(z,\cR^\kappa(z))-\cL(z,\cR(z))\bigr)\,\dd z \, \dd x'\\
	&=\int _{H_j} h(x') \int_{  (x'+\R\vecn_j)\cap  Q_j^\kappa}  \bigl ( \cL(x',\cR^\kappa(z))-\cL(x',\cR(z))\bigr)\,\dd z \, \dd x'+o(\kappa^{-1}),
	\end{split}\end{align}
	where $H_j = \partial A_j$. 
	Now note that for a fixed $x'\in H_j$ for which $(x'+\vecn_j \R) \cap  Q_j^\kappa$ does not intersect one of the $Q_i^\kappa$ with $i\not =j$, one has 
		\begin{align*}
	\int_{(x'+\R\vecn_j ) \cap  Q_j^\kappa}  \cL(x',\cR(z))\,\dd z =&|(x'+\R\vecn_j)\cap Q_j^\kappa \cap A_j|\,  (L^+_j(x')+o(1)) \\
	&+|(x'+\R\vecn_j)\cap Q_j^\kappa \cap A^c_j|\,  (L^-_j(x')+o(1)),
	\end{align*}
	where $|\cdot|$ denotes the $1$-dimensional Hausdorff measure (i.e., the length of the segment),
	$$
	L_j^+(x')=\cL(x', \delta_j^+\cdot x'+\mathfrak b_j^+ +\hat \cR_j (x') ) \text{ \ and \ } 
	L_j^-(x')= \cL(x', \delta_j^-\cdot x'+\mathfrak b_j^- +\hat \cR_j (x') )
	$$
	with
	$$
		\hat \cR_j (x')=\mathfrak b+ \sum_{i \neq j} \bigl(\1_{A_i}(x')(\delta_i^+\cdot x'+\mathfrak b_i^+)+\1_{A_i^c}(x')(\delta_i^-\cdot x'+\mathfrak b_i^-)\bigr).
	$$
	Moreover, for the same $x'$
	$$
	\int_{(x'+\R \vecn_j) \cap  Q_j^\kappa}  \cL(x',\cR^\kappa(z))\,\dd z  = |(x'+\vecn_j \R) \cap  Q_j^\kappa| \,  (\overline L_j(x')+o(1)),
	$$
	where $\overline L_j(x')$ is the average of $\cL(x',\cdot)$ on the segment  $[\delta_j^-\cdot x'+\mathfrak b_j^-+\hat \cR_j(x'), \delta_j^+\cdot x'+\mathfrak b_j^++\hat \cR_j(x') ]$. 
		
	We calculate the Hausdorff measure of the segments $(x'+\R\vecn_j)\cap Q_j^\kappa \cap A_j$ and $(x'+\R\vecn_j)\cap Q_j^\kappa \cap A_j^c$.
		We note that for $t\in\R$, $x'+t\vecn_j$ lies in $Q_j^\kappa$ if $t$ lies between the solutions $t_{+/-}^\kappa$ of
		$$
		 \delta_j^{+/-} \cdot ( x'+t^\kappa_{+/-}\vecn_j)+\mathfrak b_j^{+/-}+ \kappa \alpha_j t_{+/-}^\kappa=0 
				$$
	so that $|(x'+\R\vecn_j)\cap Q_j^\kappa \cap A_j|=|[t_-^\kappa,t_+^\kappa]\cap[0,\infty)|$.  Since 
		$$
		\lim_{\kappa\to\infty} \kappa\, t^\kappa_{+/-}=-\frac 1{\alpha_j}(\delta_j^{+/-}\cdot x'+\mathfrak b_j^{+/-})=:t_{+/-}(x')
		$$
we get that 
		$$
		q_j^+(x'):=\lim_{\kappa\to \infty} \kappa \,\bigl|(x'+\R\vecn_j)\cap Q_j^\kappa \cap A_j\bigr|= \bigl| [t_-(x'),t_+(x')]\cap[0,\infty)\bigr|
		$$
Analogously, it follows that
		$$
		q_j^-(x'):=\lim_{\kappa\to \infty} \kappa \,\bigl|(x'+\R\vecn_j)\cap Q_j^\kappa \cap A_j^c\bigr|= \bigl| [t_-(x'),t_+(x')]\cap(-\infty,0]\bigr|.
		$$
	Combining the estimates gives that for every $x'\in H_j\backslash \bigcup_{i\not=j} H_i$ one has
	\begin{align*}
	\lim_{\kappa\to\infty} \kappa &\int_{  (x'+\R\vecn_j)\cap  Q_j^\kappa}  \bigl ( \cL(x',\cR^\kappa(z))-\cL(x',\cR(z))\bigr)\,\dd z\\
	&=(q_j^+(x')+q_j^-(x')) \overline L_j(x') - (q_j^-(x')\,L^+_j(x')  +q_j^+(x')\,    L^-_j(x'))
	\end{align*}

With~(\ref{eq935621}), (\ref{eq73461}) and dominated convergence  we get that
	\begin{align*}
	\lim_{\kappa\to\infty}	&\kappa \int \bigl(\cL(x,\cR^\kappa(x))-\cL(x,\cR(x)) \bigr)\,\dd\mu(x) \\
		&=\sum_{j=1}^K \int_{ H_j}  h(x') \bigl((q_j^+(x')+q_j^-(x'))\, \overline L_j(x') - (q_j^+(x')\,L^+_j(x')  +q_j^-(x')\,    L^-_j(x'))\bigr) \,\dd x' , 
	\end{align*}	
where we used that $\kappa  \int_{  (x'+\R\vecn_j)\cap  Q_j^\kappa}  \bigl ( \cL(x',\cR^\kappa(z))-\cL(x',\cR(z))\bigr)\,\dd z$ is uniformly bounded over all $j=1,\dots K$, $x'\in H_j\cap \ID$ and $\kappa\ge1$.

Now consider $\cR^{-\kappa}$ given by 
	$$
		\cR^{-\kappa}(x)=\mathfrak b^{-\kappa} +\sum_{j=1}^K -(-\delta_j^+\cdot 	x-\mathfrak b_j^++\kappa\alpha_j(\vecn_j\cdot x-o_j))^+ +(\delta_j^-\cdot x+\mathfrak b_j^--\kappa\alpha_j(\vecn_j\cdot x-o_j))^+.	
	$$
	where $\mathfrak b^{-\kappa} := \mathfrak b - \sum_{j=1}^K \kappa \alpha_j o_j$.
	Following the same arguments as above we get that
	\begin{align*}
	\lim_{\kappa\to\infty}	&\kappa\int \cL(x,\cR^{-\kappa}(x))-\cL(x,\cR(x)) \,\dd\mu(x) \\
		&=\sum_{j=1}^K \int_{ H_j}  h(x') \bigl((q_j^+(x')+q_j^-(x')) \overline L_j(x') - (q_j^-(x')\,L^+_j(x')  +q_j^+(x')\,    L^-_j(x'))\bigr) \,\dd x'  .
	\end{align*}	
	Adding the estimates we get with $q_j(x')=q_j^+(x')+q_j^-(x')$ that 
	\begin{align}\begin{split}\label{eqw426}
	\lim_{\kappa\to\infty}& \kappa \int \bigl(\cL(x,\cR^{-\kappa}(x))+\cL(x,\cR^{\kappa}(x))-2\cL(x,\cR(x))\bigr) \,\dd\mu(x)\\
	&=\sum_{j=1}^K \int_{ H_j}  h(x') q_j(x')  \bigl(2 \overline L_j(x') - (L^+_j(x')  +   L^-_j(x'))\bigr) \,\dd x'.
	\end{split}\end{align}
By strict convexity of $\cL(x',\cdot)$, one has $ 2 \overline L_j(x') \le L_j^+(x')+L_j^-(x') $ with strict inequality whenever $\delta_j^- \cdot x'+\mathfrak b_j^- \neq \delta_j^+ \cdot x' + \mathfrak b_j^+$.
	Since $\partial A_1 \not \subset \{x \in \R^{d_{\mathrm{in}}}:\delta_1 \cdot x + \mathfrak b_1 =0\}$ we have that the set $H_1'$ consisting of all $x' \in H_1$ such that $h(x')>0$, $q_1(x')>0$ and $\delta_j^- \cdot x'+\mathfrak b_j^- \neq \delta_j^+ \cdot x' + \mathfrak b_j^+$
	has strictly positive $(d_{\mathrm{in}}-1)$-dimensional Hausdorff measure. 
	Consequently, the limit in~(\ref{eqw426}) is strictly negative and there exists $\kappa>0$ for which either $\cR^\kappa$ or $\cR^{-\kappa}$ is a better response than $\cR$ which contradicts optimality of $\cR$.
	\end{proof}

{\bf Acknowledgments.}
Funded by the Deutsche Forschungsgemeinschaft (DFG, German Research Foundation) under Germany's Excellence Strategy EXC 2044--390685587, Mathematics Münster: Dynamics--Geometry--Structure. Funded by the Deutsche Forschungsgemeinschaft (DFG, German Research Foundation) – SFB 1283/2 2021 – 317210226.

\bibliographystyle{alpha}
\bibliography{Optimal_networks}

\end{document}